\def\eqref#1{equation~\ref{#1}}
\def\1{\bm{1}}
\DeclareMathAlphabet{\mathsfit}{\encodingdefault}{\sfdefault}{m}{sl}
\SetMathAlphabet{\mathsfit}{bold}{\encodingdefault}{\sfdefault}{bx}{n}
\lstdefinelanguage{python}{
    basicstyle=\ttfamily\small,
    numbers=none,
    numberstyle=\tiny,
    stepnumber=1,
    numbersep=5pt,
    showstringspaces=false,
    breaklines=true,
    frame=single,
    backgroundcolor=\color{gray!10}, % Ensure 'gray' is defined or use a standard xcolor gray
    % The following literate settings create a very specific highlighting style.
    % It highlights all digits blue and some punctuation red.
    % This is not standard Python highlighting but a user preference.
    literate=
     *{0}{{\textcolor{blue}{0}}}{1}
      {1}{{\textcolor{blue}{1}}}{1}
      {2}{{\textcolor{blue}{2}}}{1}
      {3}{{\textcolor{blue}{3}}}{1}
      {4}{{\textcolor{blue}{4}}}{1}
      {5}{{\textcolor{blue}{5}}}{1}
      {6}{{\textcolor{blue}{6}}}{1}
      {7}{{\textcolor{blue}{7}}}{1}
      {8}{{\textcolor{blue}{8}}}{1}
      {9}{{\textcolor{blue}{9}}}{1}
      {:}{{\textcolor{red}{:}}}{1} % Consider if red for colons is intended for all contexts
      {,}{{\textcolor{red}{,}}}{1} % Consider if red for commas is intended for all contexts
      {"}{{\textcolor{brown}{"}}}{1} % Ensure 'brown' is defined or use a standard xcolor brown
}
\definecolor{prmblue}{RGB}{0,0,255}
\definecolor{ormgreen}{RGB}{0,153,0}
\definecolor{redtext}{RGB}{204,0,0}
\definecolor{darkgreen}{rgb}{0.0, 0.5, 0.0}
\definecolor{darkred}{rgb}{0.7, 0.0, 0.0}
\let\cref\Cref             % Makes \cref behave like \Cref (i.e., always capitalize reference type)
\newcommand{\EORM}{\textsc{Eorm}} % Using small caps for 
\newcommand{\checkicon}{\textcolor{darkgreen}{\ding{51}}} % Using defined darkgreen
\newcommand{\crossicon}{\textcolor{darkred}{\ding{55}}}   
\newtheorem{definitioninner}{Definition}[section] % Numbered per section
\newtheorem{theoreminner}{Theorem}[section]
\newtheorem{propositioninner}{Proposition}[section]
\newtheorem{remarkinner}{Remark}[section]
\Crefname{definitioninner}{Definition}{Definitions}
\Crefname{theoreminner}{Theorem}{Theorems}
\Crefname{propositioninner}{Proposition}{Propositions}
\Crefname{remarkinner}{Remark}{Remarks}
\Crefname{section}{Section}{Sections}
\Crefname{figure}{Figure}{Figures}
\Crefname{table}{Table}{Tables}
\Crefname{appendix}{Appendix}{Appendices}
\Crefname{algorithm}{Algorithm}{Algorithms}
\Crefname{equation}{Eq.}{Eqs.}
\Crefname{lstlisting}{Listing}{Listings}
\definecolor{ForestGreen}{HTML}{228B22}
\definecolor{lightpurple}{rgb}{0.9, 0.9, 1.0}
\newcommand{\greenup}[1]{\textcolor{ForestGreen}{\tiny\raisebox{1.5pt}{$\blacktriangle$}{#1}}}
\title{Learning to Rank Chain-of-Thought: Using a Small Model}
\author{%
Eric H. Jiang\textsuperscript{1},
Haozheng Luo\textsuperscript{2},
Shengyuan Pang \textsuperscript{3},
Xiaomin Li\textsuperscript{4},
Zhengting Qi \textsuperscript{4}, 
Hengli Li \textsuperscript{5}, \\
Cheng-Fu Yang \textsuperscript{1},
Zongyu Lin \textsuperscript{1},
Xinfeng Li  \textsuperscript{6},
Hao Xu  \textsuperscript{4},
Kai-Wei Chang\textsuperscript{1},
Ying Nian Wu\textsuperscript{1}
}
\affil{%
\textsuperscript{1} University of California, Los Angeles \quad
\textsuperscript{2} Northwestern University \quad
\textsuperscript{3} Zhejiang University \\
\textsuperscript{4} Harvard University \quad
\textsuperscript{5} Peking University  \quad
\textsuperscript{6} Nanyang Technological University 
}
\begin{document}

\maketitle

\begin{abstract}
Large Language Models (LLMs) struggle with reliable mathematical reasoning, and current verification methods are often computationally expensive. This paper introduces the Energy Outcome Reward Model (EORM), a highly efficient, lightweight post-hoc verifier designed to address this challenge. EORM uses an energy-based framework to rank Chain-of-Thought (CoT) solutions, learning to distinguish correct from incorrect reasoning using only simple outcome labels, thus eliminating the need for expensive annotations. With only 55M parameters, over 127 times smaller than typical reward models, EORM boosts the accuracy of Llama 3 8B to 90.7\% on GSM8k and 63.7\% on MATH. This performance is achieved by efficiently selecting the optimal reasoning path from a pool of candidates, allowing it to match or exceed the accuracy of far more resource-intensive Best-of-N sampling techniques. Crucially, our experiments show that EORM generalizes effectively to out-of-distribution problems and unseen models, indicating it learns fundamental principles of valid reasoning. This robustness, combined with its efficiency, establishes EORM as a practical tool for deploying more dependable LLMs in complex, real-world applications. Our code is available:  
\url{https://github.com/ericjiang18/EnergyORM/tree/main}
\end{abstract}

\section{Introduction}

Mathematical reasoning remains a critical and challenging domain for Large Language Models (LLMs), demanding a high degree of logical consistency and multi-step inferential accuracy that often eludes current architectures \citep{guo2025deepseek, tong2024dartmath}. While Chain-of-Thought (CoT) \citep{wei2022chain, kojima2022large} has significantly advanced the ability of LLMs to articulate intermediate reasoning steps, thereby improving performance on complex tasks \citep{wang2022self, huang2022language, yao2022react}, it offers no intrinsic guarantee of correctness. This leads to a fundamental challenge: among the multiple Chain-of-Thought produced by the LLMs, how can we reliably identify the single most accurate one? This selection problem, widely referred to as the \textit{Best-of-N challenge}, has become a key bottleneck in advancing the reliability of mathematical reasoning with LLMs \citep{tong2024dartmath, xiong2024large, press2022measuring}.

 To solve the Best-of-N problem, researchers have tried many approaches, and currently, existing approaches fall into two categories: (1) majority voting and (2) post-training with reinforcement learning (RL) to train a reward model for answer selection. Majority voting \citep{toh2025not,wang2022self} is the simplest method, it chooses the answer most frequently generated by the model. While this method is easy to implement, it tends to reflect the model’s biases rather than identifying the truly correct answer. The RL-based approaches are more sophisticated, with three prominent variants: \textbf{(1) Preference Optimization (PO)}, \textbf{(2) Outcome Reward Models (ORM)}, and \textbf{(3) Process Reward Models (PRM)}. Preference Optimization \citep{NEURIPS2024_d37c9ad4,zhang2024chain} uses pairwise comparisons between correct and incorrect answers to train a reward model. Although it can improve answer quality, it requires training a new model for each task and struggles with scenarios involving multiple complex outputs. Outcome Reward Models \citep{lyu2025exploring,hosseini2024vstartrainingverifiersselftaught,cobbe2021trainingverifierssolvemath} are similar but allow supervision from multiple positive and negative examples, making them more flexible than pairwise-only methods. Process Reward Models \citep{she2025rprmreasoningdrivenprocessreward,wang2025visualprm} go one step further by assigning rewards to each step in the reasoning process. However, this requires extensive annotation of every Chain-of-Thought (CoT) trace, making data preparation labor-intensive. We provide a more in-depth comparison of the limitations of the different types of reward model in \cref{table:compare}.

This motivates the need for a more efficient and generalizable alternative. Our approach begins with the observation that model outputs are sequences of tokens, which allows us to tokenize these responses and analyze their statistical patterns. Each output can be interpreted as a sample from an underlying probability distribution. Leveraging the Universal Approximation Theorem \citep{liu2025attentionmechanismmaxaffinepartition,Kratsios_2021}, we recognize that a sufficiently expressive neural network can, in principle, learn to distinguish between these distributions. Building on this insight, we propose training a lightweight neural network model to classify responses as good or bad, effectively select the best answer using Best-of-N, acting like a “LLM-as-a-Judge”. Crucially, because this model learns from token-level patterns rather than task-specific features, it is capable of generalizing across outputs from different base LLMs. To further enhance our neural network ability to capture complex distributions and support robust generalization, we adopt an Energy-Based Model (EBM) architecture \citep{deng2020residualenergybasedmodelstext,du2020implicitgenerationgeneralizationenergybased}. EBMs have demonstrated strong performance in related tasks, utilizing their uniqueness of using Energy Probabilistic Landscape, and in our experiments, our EBM-based judge consistently outperforms both majority voting and traditional Outcome Reward Models in identifying the best answer.

\begin{table}[t]
\centering
\vspace{-1.5em}

%\caption{Comparison between TrustAgent and other surveys.}
\label{comparison}
\begin{adjustbox}{width=0.95\linewidth}
\begin{tabular}{c|cccccc}
    \toprule[1pt]
    \rowcolor{gray!10} \textbf{Feature} & \textbf{ORM} & \textbf{PRM} & \textbf{PO} & \textbf{EORM (ours)} \\
    \midrule
    Does not require Step-by-Step Labels?  & \checkicon & \crossicon & \checkicon  & \checkicon \\
    \rowcolor{gray!0} Does not require Preference Pair Labels?  & \checkicon & \checkicon & \crossicon & \checkicon \\
    Uses Internal Indicators as Reward? & \crossicon & \crossicon & \crossicon & \checkicon  \\
    \rowcolor{gray!0} Low Annotation Cost?  & \checkicon & \crossicon & \checkicon & \checkicon  \\
    Lightweight Model & \crossicon & \crossicon & \crossicon & \checkicon \\
    \midrule[1pt]
\end{tabular}
\end{adjustbox}
\caption{\textbf{EORM's feature benefit compared to other reasoning baselines.} We compare EORM with ORM (Outcome Reward Model), PRM (Process Reward Model), and PL (Preference Learning) to demonstrate the feature benefit of our method over these baselines.}  
\vspace{-1.0em}

\label{table:compare}
\end{table}

In this paper, we introduce the Energy Outcome Reward Model (EORM), an efficient post-hoc verifier for CoT outputs. EORM leverages principles from Energy-Based Models (EBMs) \citep{ du2020implicit, liu2020energy} to effectively rerank and select the best candidate solutions. Our methodology trains a lightweight model to assign a scalar energy score to each candidate, learning to distinguish correct from incorrect reasoning using only simple binary outcome labels. Critically, this approach allows us to drastically reduce the model size from the typical 7B to 8B parameters of reward models to just 55M, achieving a greater than 127x reduction in parameter count while maintaining high performance.

Our main contributions are threefold:
\begin{itemize}
    \item \textbf{A Novel Small and Efficient Energy Reward Model for CoT Verification.} We propose \EORM{}, a lightweight and efficient verifier that assigns a scalar energy score to each CoT solution. By training on simple binary outcome labels (correct/incorrect), \EORM{} eliminates the need for costly step-by-step annotations or preference pairs required by Process and Preference-based Reward Models.
    \item \textbf{State-of-the-Art Performance on Math Reasoning.} We demonstrate that \EORM{} significantly boosts the performance of various open-source LLMs on challenging mathematical reasoning benchmarks, including GSM8k and MATH. Our approach consistently achieves superior accuracy by effectively reranking candidate solutions, often matching or exceeding the performance of more computationally intensive methods.
    \item \textbf{Demonstrated Robust Generalization.} We empirically validate that \EORM{} generalizes effectively to both out-of-distribution datasets and unseen LLM architectures. This highlights its ability to learn underlying principles of logical reasoning rather than overfitting to the stylistic patterns of the models in its training set.
\end{itemize}

The remainder of this paper is organized as follows. We first review related work in \cref{sec:related_work_main}. Next, we detail the \EORM{} architecture and training methodology in \cref{sec:methodology}. We then present our comprehensive experimental results and analysis in \cref{sec:experiments} and abaltions in \cref{sec:ablation}. Finally, we conclude the paper in \cref{sec:conclusion}.

\begin{figure}[t]
    \centering
    \includegraphics[width=\textwidth]{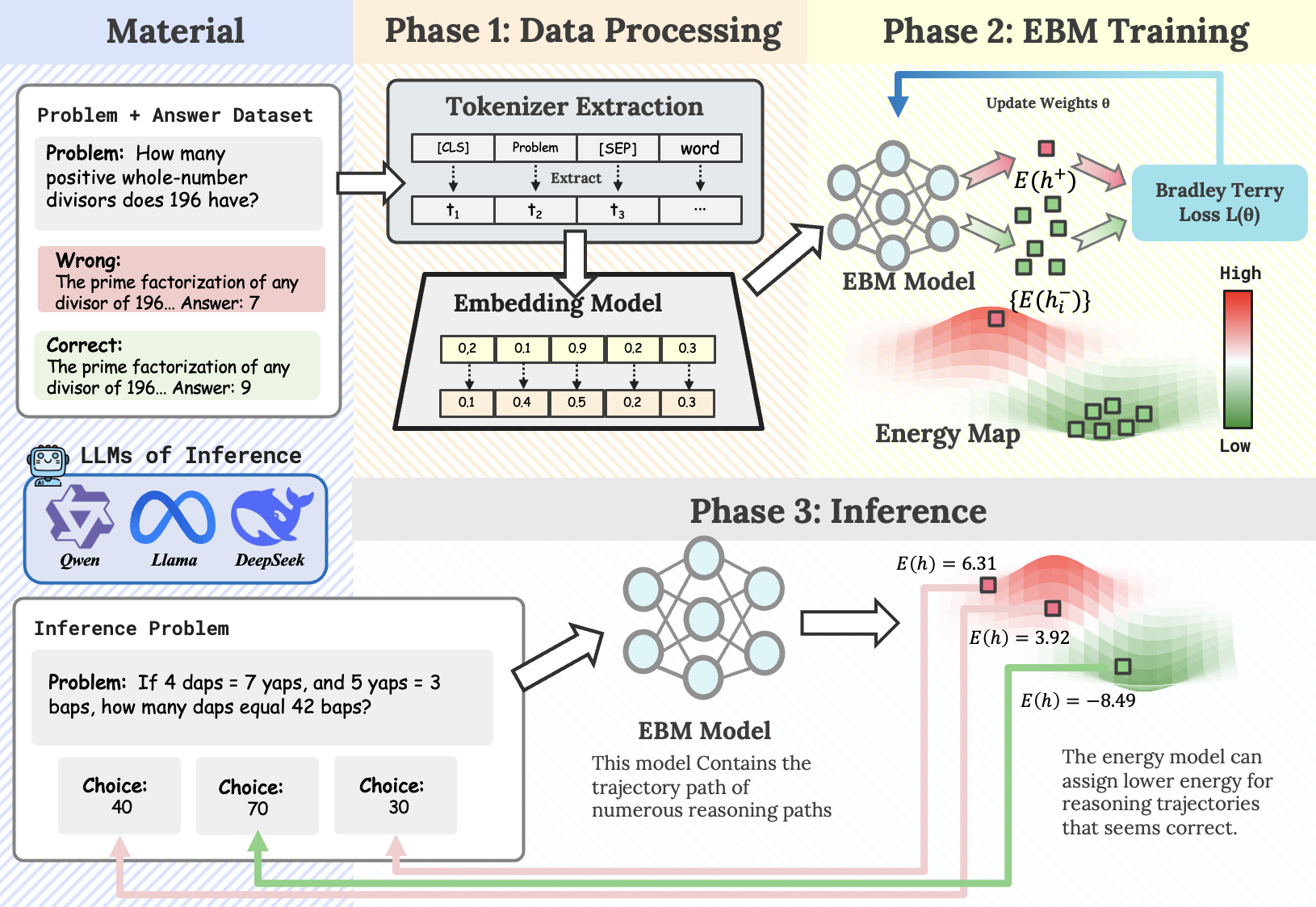}
     \vspace{-0.2in}
    \caption{\textbf{An overview of flow chart of EORM. } In the EORM process, the model tokenizes the question-answer pair, then computes an energy score using an Energy-Based Model (EBM). The Bradley-Terry loss serves as the objective for reward-based fine-tuning. During deployment, the trained energy reward model computes energy scores for classification tasks.}
    \label{fig:flow}
    \vspace{-0.2in}
\end{figure}

% --- Start of Condensed Related Work Section (Main Text) ---
\section{Related Work}
\label{sec:related_work_main}

Our work intersects with advancements in Chain-of-Thought (CoT) reasoning, the verification and reranking of Large Language Model (LLM) outputs, and Energy-Based Models (EBMs). CoT reasoning \citep{wei2022chain, kojima2022large} has significantly improved multi-step reasoning in LLMs \citep{bai2023qwen, touvron2023llama, openai2023gpt4, guo2025deepseek}, with further refinements like Least-to-Most prompting \citep{zhou2022least} and Tree-of-Thoughts \citep{yao2023tree, zhang2023automatic}. However, the fallibility of CoT outputs \citep{tong2024dartmath}, where a single error can invalidate a solution, necessitates robust verification. Common approaches like self-consistency \citep{wang2022self} improve reliability but incur substantial computational costs by sampling numerous solutions \citep{wu2024_selectivefilter}. To address this, various reranking and verification techniques have emerged, including training separate verifier models \citep{cobbe2023_processsupervision, khalifa2023_grace, li2023_diverse} or adopting learning-to-rank perspectives \citep{liu2009learning, deng2020_residualEBM}. These methods, while effective, often introduce their own complexities or computational demands. Our approach, EORM, draws inspiration from EBMs \citep{du2020implicit}, which assign scalar energy scores to configurations and are well-suited for ranking \citep{grathwohl2019classifier, liu2009learning}. Specifically, we leverage the insight that classifier logits can be interpreted as negative energies \citep{grathwohl2019classifier, liu2020energy}. By training a lightweight EBM with a pairwise Bradley-Terry loss \citep{liu2009learning} on outcome labels, EORM efficiently reranks CoT candidates, offering a distinct, streamlined post-hoc verification mechanism that complements retrieval-augmented methods \citep{rakin2024leveraging, schick2023toolformer, yao2022react, press2022measuring, khattab2022demonstrate} and other specialized verifiers like Chain-of-Actions \citep{pan2025chainofaction} or Search-in-the-Chain \citep{xu2023search}. For a more detailed discussion of related literature, please refer to \Cref{app:extended_related_work}.

\section{Methodology}
\label{sec:methodology}

In this section, we introduce the architectural design and technical foundations of our proposed \textbf{Energy Outcome Reward Model (\EORM{})}, a lightweight verifier for Chain-of-Thought (CoT) reasoning. We begin by outlining the principles of Energy-Based Models (EBMs), which form the foundation of our approach. We then detail the specific architecture and training objective of \EORM{}, showing how it learns to rank CoT candidates effectively using only simple outcome-based supervision.

\subsection{Preliminaries: Energy-Based Models for Ranking}

Evaluating CoT reasoning requires a flexible mechanism that can distinguish high-quality reasoning from incorrect or incoherent paths. Instead of treating this as a classification task, we model it as a preference-ranking problem using Energy-Based Models (EBMs).

Energy-Based Models (EBMs) provide a flexible approach to modeling distributions by assigning a scalar energy $E_\theta(y)$ to each configuration $y$ from a space $\mathcal{Y}$. This energy function is parameterized by $\theta$. Lower energy typically corresponds to more desirable or probable configurations. In this work, $\mathcal{Y}$ is the space of possible Chain-of-Thought text sequences $y$, and $\theta$ represents the learnable parameters of our \EORM\ model.

Given an energy function $E_\theta(y)$, the corresponding Boltzmann (or Gibbs) distribution $p_\theta(y)$ assigns a probability to each configuration $y \in \mathcal{Y}$ as:
\begin{equation}
\label{eq:boltzmann_methodology}
p_{\theta}(y) \;=\; \frac{\exp\bigl(-E_{\theta}(y)\bigr)}{Z_{\theta}},
\end{equation}
where $Z_{\theta}$ is the partition function, a normalization constant that ensures $p_\theta(y)$ sums to unity:
\begin{equation}
\label{eq:partition_function_methodology}
Z_{\theta} \;=\; \sum_{y'\in\mathcal{Y}} \exp\bigl(-E_{\theta}(y')\bigr) \quad (\text{for discrete } \mathcal{Y}).
\end{equation}
A key challenge in working with EBMs is that computing the partition function $Z_\theta$ is often computationally intractable. However, for tasks involving ranking or selecting the best candidate from a finite set $\mathcal{Y}_{\mathrm{cand}} \subset \mathcal{Y}$, the explicit computation of $Z_\theta$ is unnecessary. Since $Z_\theta$ is constant for all $y \in \mathcal{Y}_{\mathrm{cand}}$, comparing probabilities $p_\theta(y)$ is equivalent to comparing the unnormalized scores $\exp(-E_\theta(y))$, which in turn relates directly to comparing the energies $E_\theta(y)$. This equivalence implies that energy minimization provides a direct mechanism for identifying the most probable (preferred) solution within a candidate set, without needing to compute $Z_\theta$:
\begin{equation}
\label{eq:energy_prob_equiv_methodology}
y^{*} \;=\; \arg\min_{y\in\mathcal{Y}_{\mathrm{cand}}}E_{\theta}(y) \;=\; \arg\max_{y\in\mathcal{Y}_{\mathrm{cand}}}p_{\theta}(y).
\end{equation}
This makes EBMs well-suited for our task of reranking CoT solutions based on learned preferences encoded in the energy function $E_\theta(y)$.

\subsection{\EORM{}: Architecture and Training Objective}
Building on the EBM framework, \EORM{} is designed as a lightweight yet powerful verifier for CoT reasoning. The core idea is to train a small neural network to learn an energy function $E_\theta(y)$ that maps any given CoT solution $y$ to a scalar energy score. By design, this function assigns low energy to correct reasoning paths and high energy to incorrect ones, enabling effective reranking of candidate solutions.

\paragraph{Architecture.} To implement the energy function $E_\theta(y)$, \EORM{} uses a lightweight Transformer encoder. A given CoT solution $y$ is first tokenized, and a special classification token (e.g., \texttt{[CLS]}) is prepended. The sequence is then passed through the Transformer encoder. The final hidden state corresponding to the \texttt{[CLS]} token, denoted $\mathbf{h}_{\text{CLS}}$, serves as a holistic representation of the entire reasoning path. This representation is fed into a simple energy head—a Multi-Layer Perceptron (MLP) with Layer Normalization—which projects it to the final scalar energy value:
\begin{equation}
\label{eq:energy_head_methodology}
E_\theta(y) \;=\; \text{MLP}\!\Bigl(\text{LayerNorm}\bigl(\mathbf{h}_{\text{CLS}}\bigr)\Bigr) \,\in\,\mathbb{R}.
\end{equation}
This scalar output $E_\theta(y)$ represents the energy assigned to the sequence $y$. Lower values indicate higher assessed quality or correctness.

\paragraph{Training Objective.} \EORM{} is trained to distinguish between correct and incorrect solutions using a simple yet effective pairwise ranking loss. This approach only requires binary outcome labels for each CoT, avoiding the need for expensive, fine-grained annotations required by process-based reward models. For a given problem, we collect a set of generated CoT solutions $\mathcal{Y}_n$ and partition it into a subset of correct solutions, $\mathcal{Y}_+$, and a subset of incorrect solutions, $\mathcal{Y}_-$.
\begin{equation}
\label{eq:subsets_methodology}
\mathcal{Y}_+ \;=\; \bigl\{\,y\in\mathcal{Y}_n \,\mid\,l(y)=1\bigr\}, \quad \mathcal{Y}_- \;=\; \bigl\{\,y\in\mathcal{Y}_n \,\mid\,l(y)=0\bigr\}.
\end{equation}
The model's parameters $\theta$ are then optimized to ensure that for any pair of solutions $(y_+, y_-)$ where $y_+ \in \mathcal{Y}_+$ and $y_- \in \mathcal{Y}_-$, the energy of the correct solution is lower than that of the incorrect one. We formalize this using the Bradley-Terry loss, which is summed over all possible pairs within the group:
\begin{equation}
\label{eq:pairwise_bt_loss_methodology}
\mathcal{L}(\theta;\mathcal{Y}_n) \;=\; \frac{1}{|\mathcal{Y}_+||\mathcal{Y}_-|} \sum_{y_+\in\mathcal{Y}_+}\sum_{y_-\in\mathcal{Y}_-} \log\Bigl(1 + \exp\bigl(E_\theta(y_+) - E_\theta(y_-)\bigr)\Bigr).
\end{equation}
Minimizing this loss encourages a clear separation in the energy landscape, pushing the model to assign consistently lower energy scores to correct solutions. This pairwise formulation provides a strong learning signal, particularly beneficial for CoT ranking where subtle differences can exist between multiple flawed or correct reasoning paths. During training, the model parameters $\theta$ are updated by iterating through the dataset and minimizing this loss via gradient-based optimization. For a more detailed theoretical analysis, please refer to Appendix~\ref{app:theoretical_details}.

\begin{table}[t]
\centering
\vspace{-1.5em}

\begin{adjustbox}{width=\linewidth} % Adjusted width for better fit
\begin{tabular}{@{}llcccc@{}}
\toprule
\rowcolor{gray!20} \textbf{Model} & \textbf{Base Model} & \textbf{Params} & \textbf{GSM8k} & \textbf{MATH} & \textbf{Avg.} \\
\midrule
Mistral-v0.1 \citep{jiang2023mistral7b} & Mistral-v0.1 & 7B & 42.9\greenup{+0.0} & 12.9\greenup{+0.0} & 27.9 \\
MathScale \citep{tang2024mathscalescalinginstructiontuning} & Mistral-v0.1 & 7B & 74.8\greenup{+31.9} & 35.2\greenup{+22.3} & 55.0 \\
MMIQC \citep{liu2023_mmiqc} & Mistral-v0.1 & 7B & 74.8\greenup{+31.9} & 36.0\greenup{+23.1} & 55.4 \\
MetaMath \citep{yu2024metamath} & Mistral-v0.1 & 7B & 77.9\greenup{+35.0} & 28.6\greenup{+15.7} & 53.3 \\
KPMath-Plus \citep{huang2024keypointdrivendatasynthesisenhancement} & Mistral-v0.1 & 7B & 82.1\greenup{+39.2} & 46.8\greenup{+33.9} & 64.5 \\
DART-Math \citep{tong2024dartmath} & Mistral-v0.1 & 7B & 82.6\greenup{+39.7} & 43.5\greenup{+30.6} & 63.1 \\
Math-Shepherd \citep{wang-etal-2024-math} & Mistral-v0.1 & 7B & 84.1\greenup{+41.2} & 33.0\greenup{+20.1} & 58.6 \\
MAmmoTH2-Plus \citep{yue2024mammoth} & Mistral-v0.1 & 7B & 84.7\greenup{+41.8} & 45.0\greenup{+32.1} & 64.9 \\
Xwin-Math \citep{li2024common7blanguagemodels} & Mistral-v0.1 & 7B & 89.2\greenup{+46.3} & 43.7\greenup{+30.8} & 66.5 \\
WizardMath-Mistral \citep{luo2023_wizardmath} & Mistral-v0.1 & 7B & 90.7\greenup{+47.8} & 55.4\greenup{+42.5} & 73.1 \\
\rowcolor{lightpurple} \textbf{EORM (Ours)} & \textbf{Mistral-v0.1} & 7B & \textbf{91.0}\greenup{+48.1} & \textbf{48.8}\greenup{+35.9} & \textbf{69.9} \\
\midrule
Llama2 \citep{touvron2023llama}& Llama 2 & 7B & 14.6\greenup{+0.0} & 2.5\greenup{+0.0} & 8.6 \\
MAmmoTH-CoT \citep{yue2024mammoth}& Llama 2 & 7B & 50.5\greenup{+35.9} & 10.4\greenup{+7.9} & 30.5 \\
MetaMath \citep{yu2024metamath}& Llama 2 & 7B & 66.5\greenup{+51.9} & 19.8\greenup{+17.3} & 43.2 \\
Math-Shepherd \citep{wang-etal-2024-math}& Llama 2 & 7B & 73.2\greenup{+58.6} & 21.6\greenup{+19.1} & 47.4 \\
\rowcolor{lightpurple} \textbf{EORM (Ours)} & \textbf{Llama 2} & 7B & \textbf{75.6}\greenup{+61.0} & \textbf{21.8}\greenup{+19.3} & \textbf{48.7} \\
\midrule
DeepSeekMath-Base \citep{shao2024deepseekmath} & DeepSeekMath & 7B & 64.2\greenup{+0.0} & 36.2\greenup{+0.0} & 50.2 \\
NuminaMath-CoT \citep{numina_math_datasets} & DeepseekMath & 7B & 75.4\greenup{+11.2} & 55.2\greenup{+19.0} & 65.3 \\
MMIQC \citep{liu2023_mmiqc}& DeepSeekMath & 7B & 79.0\greenup{+14.8} & 45.3\greenup{+9.1} & 62.2 \\
KPMath-Plus \citep{huang2024keypointdrivendatasynthesisenhancement} & DeepSeekMath & 7B & 83.9\greenup{+19.7} & 48.8\greenup{+12.6} & 66.4 \\
DeepSeekMath-RL \citep{shao2024deepseekmath} & DeepSeekMath & 7B & 88.2\greenup{+24.0} & 51.7\greenup{+15.5} & 70.0 \\
\rowcolor{lightpurple}
\textbf{EORM (Ours)} & \textbf{DeepSeekMath} & 7B & \textbf{84.2}\greenup{+20.0} & \textbf{58.7}\greenup{+22.5} & \textbf{71.5} \\
\midrule
Llama 3 \citep{grattafiori2024llama3herdmodels} & Llama 3 & 8B & 76.6\greenup{+0.0} & 28.9\greenup{+0.0} & 52.8 \\
MetaMath \citep{yu2024metamath}& Llama 3 & 8B & 77.3\greenup{+0.7} & 30.8\greenup{+1.9} & 54.1 \\
MMIQC \citep{liu2023_mmiqc} & Llama 3 & 8B & 77.6\greenup{+1.0} & 29.5\greenup{+0.6} & 53.6 \\
DART-Math \citep{tong2024dartmath} & Llama 3 & 8B & 82.5\greenup{+5.9} & 45.3\greenup{+16.4} & 63.9 \\
MAmmoTH2-Plus \citep{yue2024mammoth} & Llama 3 & 8B & 84.1\greenup{+7.5} & 42.8\greenup{+13.9} & 63.5 \\
Llama 3.1-Instruct \citep{grattafiori2024llama3herdmodels} & Llama 3 & 8B & 84.5\greenup{+7.9} & 51.9\greenup{+23.0} & 68.2 \\
JiuZhang3.0 \citep{zhou2024jiuzhang} & Llama 3 & 8B & 88.6\greenup{+12.0} & 51.0\greenup{+22.1} & 69.8 \\
WizardMath-Llama \citep{luo2023_wizardmath} & Llama 3 & 8B & 90.3\greenup{+13.7} & 58.8\greenup{+29.9} & 74.6 \\
\rowcolor{lightpurple}
\textbf{EORM (Ours)} & \textbf{Llama 3} & 8B & \textbf{90.7}\greenup{+14.1} & \textbf{63.7}\greenup{+34.8} & \textbf{77.2} \\
\midrule
Qwen2.5 7B \citep{yang2024qwen2}& Qwen 2.5 & 7B & 89.5\greenup{+0.0} & 63.4\greenup{+0.0} & 76.5 \\
\rowcolor{lightpurple}
\textbf{EORM (Ours)} & \textbf{Qwen 2.5} & 7B & \textbf{92.8}\greenup{+3.3} & \textbf{65.8}\greenup{+2.4} & \textbf{79.3} \\
\bottomrule
\end{tabular}
\end{adjustbox}
\caption{\textbf{Performance comparison on math reasoning benchmarks.} We evaluate EORM on GSM8K and MATH using five LLM structures (Mistral, DeepSeekMath, LLaMA3, Qwen2.5, LLaMA2), measuring accuracy. EORM achieves the highest performance across models, with best accuracy values highlighted in bold.}
\label{tab:user_provided_table_1_filtered}

\end{table}

\section{Experiments}
\label{sec:experiments}

\begin{wrapfigure}[25]{r}{0.5\textwidth}
    \vspace{-1.5em}
    \centering
\includegraphics[width=0.5\textwidth]{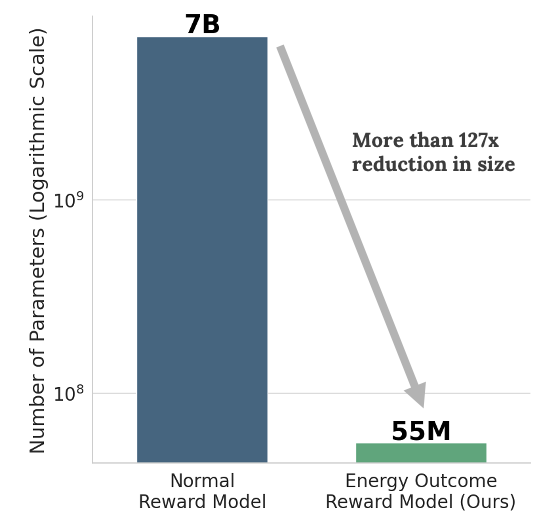} 
\captionsetup{font=small}     
\caption{\textbf{A comparison of the parameter sizes between a standard reward model and our EORM Model.} A typical reward model has approximately 7 billion parameters, while EORM has only 55 million, demonstrating a size reduction of over 127 times and highlighting EORM's efficiency.}
    \label{fig:size}
    \vspace{-3.2em}
\end{wrapfigure}

In this section, we empirically evaluate our proposed Energy Outcome Reward Model (\EORM{}). A key advantage of \EORM{}, as highlighted in our introduction, is its exceptional efficiency; with only 55M parameters, it is over 127 times smaller than typical 7B reward models, as shown in \cref{fig:size}. Our evaluation is structured into three main parts. \textbf{(1)} First, in \cref{sec:in_distribution}, we assess its performance on in-distribution mathematical reasoning tasks.  \textbf{(2)} Second, in \cref{sec:out_of_distribution}, we test its robustness on out-of-distribution benchmarks. \textbf{(3)} Finally, in \cref{sec:generalization}, we analyze its ability to generalize to outputs from unseen LLM architectures. For our experiments, we use the GSM8k \citep{cobbe2021_gsm8k} and MATH \citep{hendrycks2021_math} datasets for in-distribution tasks, and AIME 2024, AMC, and AGIEval's SAT Math and Gaokao Math subsets \citep{zhong2023agieval} for out-of-distribution evaluation. We applied EORM to several open-source Large Language Models (LLMs): Mistral-7B-v0.1 \citep{jiang2023mistral7b}, DeepSeekMath-7B \citep{shao2024deepseekmath}, Llama 3 8B \citep{grattafiori2024llama}, Qwen 2.5 7B \citep{yang2024qwen2}, and Llama 2 7B \citep{touvron2023llama}.

EORM was trained using a pairwise Bradley-Terry loss function \citep{liu2009learning}. The training dataset was constructed from Chain-of-Thought (CoT) solutions for problems in the in-domain GSM8k and MATH training splits. These solutions were generated using all five aforementioned LLMs. For each training problem, we generated $n=256$ CoT candidates with a temperature of $0.7$ and a sampling probability of $0.9$, ensuring all attempts were included, regardless of their correctness. Each training instance comprised the original question, a generated solution, and a label indicating whether the solution was correct ($y^+$) or incorrect ($y^-$). EORM learned from these pairs to assign lower energy scores to preferred solutions by interpreting classifier logits as negative energies \citep{grathwohl2019classifier, liu2020energy}. The GPT-2 tokenizer \citep{radford2019language} was employed for the energy-based tokenizer.

For evaluation, we generated Chain-of-Thought (CoT) candidates across all models using a temperature of $0.7$ and a sampling probability of $0.9$. Specifically, $n=256$ candidates were generated for each in-distribution problem, while $n=64$ candidates were generated for each out-of-distribution problem. Subsequently, EORM was used post-hoc to select the candidate with the lowest energy. The final answer accuracy is reported in the two subsections that follow.

\begin{figure}[htbp] % [htbp] is a placement specifier: here, top, bottom, page
    \centering % Optional: centers the group of images on the page
\vspace{-1.5em}

    \begin{minipage}{0.48\textwidth} % Adjust width as needed
        \centering
        \includegraphics[width=\linewidth]{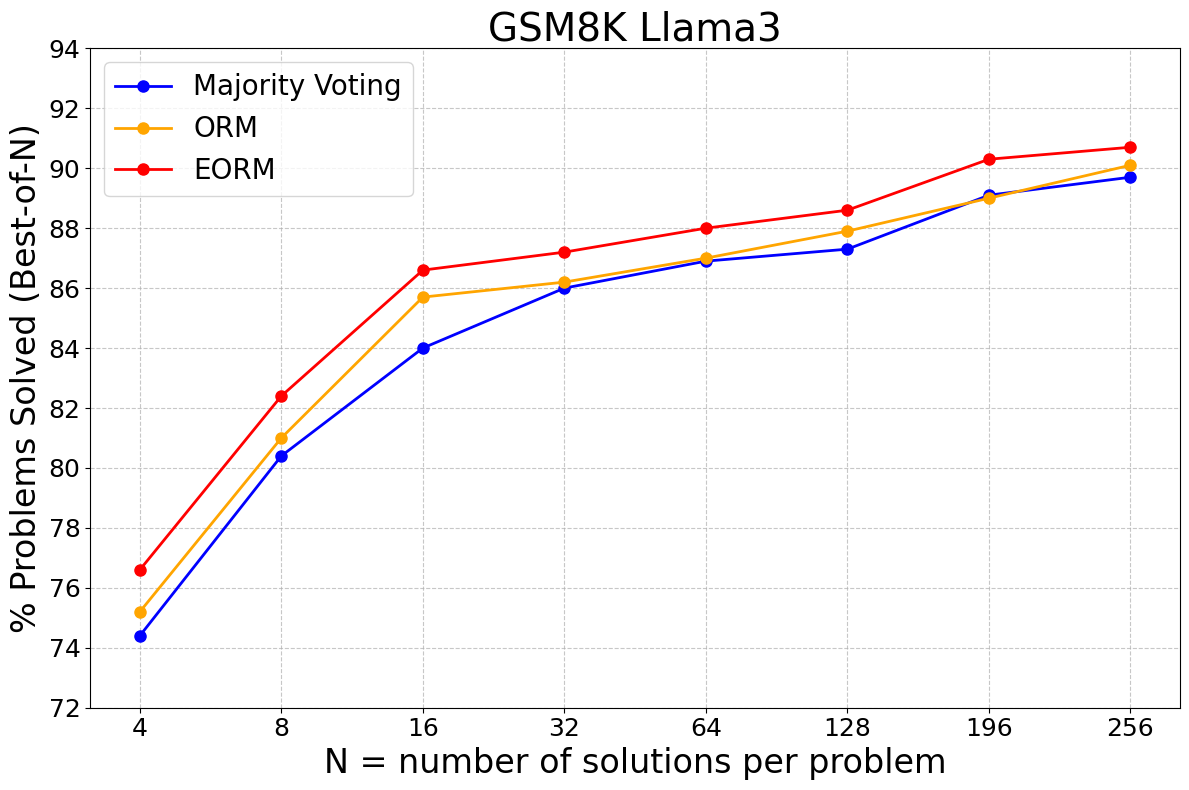}
        %\caption{Caption for the first image (compare.png)} % Optional caption
        \label{fig:compare1} % Optional label for referencing
    \end{minipage}\hfill % \hfill adds horizontal space between the minipages
    \begin{minipage}{0.48\textwidth} % Adjust width as needed
        \centering
        \includegraphics[width=\linewidth]{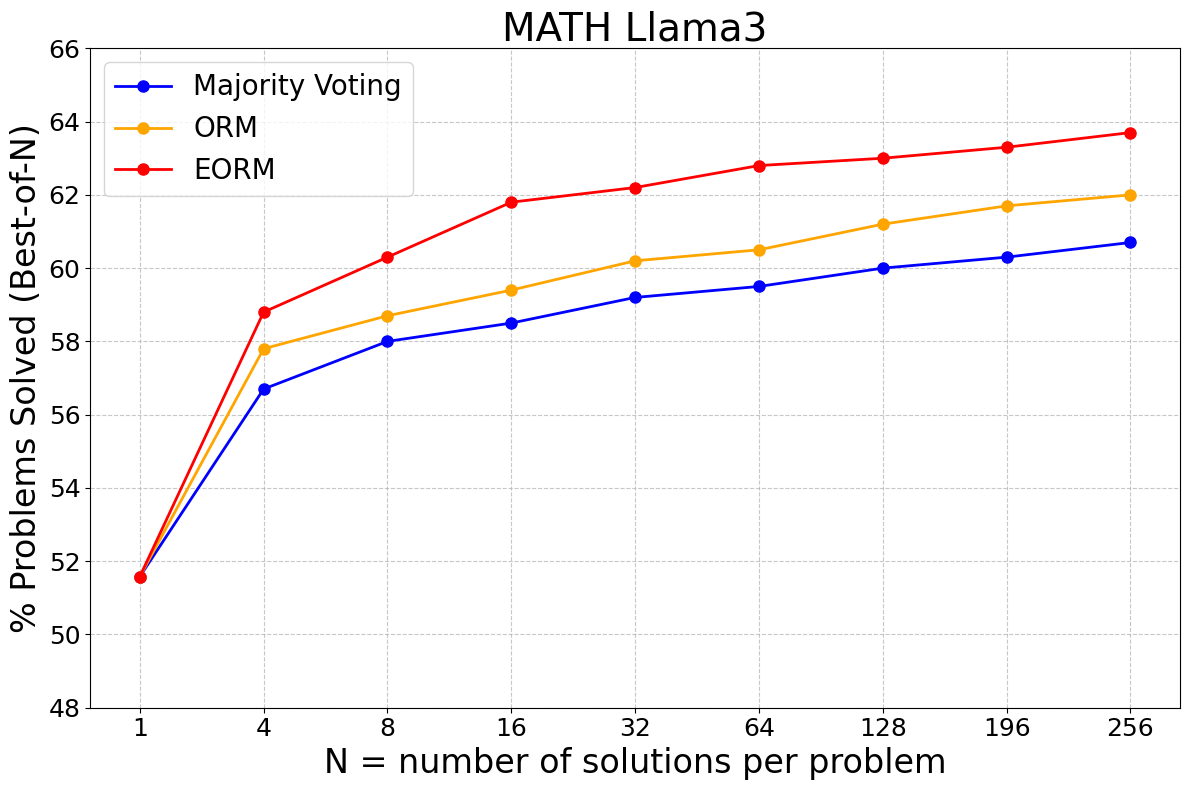}
        %\caption{Caption for the second image (compare.png)} % Optional caption
        \label{fig:compare2} % Optional label for referencing
    \end{minipage}

    \caption{\textbf{EORM performance with varying samples per question.} We conduct experiments to show how the number of samples influences the problem-solving rate, using accuracy as the metric. The results indicate that model performance improves as the number of samples increases.
 } % Optional overall caption for the figure
    \label{fig:compare} % Optional overall label

\vspace{-1.5em}
\end{figure}

\subsection{In Distribution Learning}
\label{sec:in_distribution}

Our primary evaluation focused on EORM's performance within the domains it was trained on, namely the GSM8k and MATH test sets. The quantitative outcomes, detailed in~\cref{tab:user_provided_table_1_filtered}, consistently show that employing EORM as a post-hoc reranker leads to substantial improvements in final answer accuracy compared to the baseline performance of the underlying LLMs generating the candidate solutions. By effectively identifying and selecting the most plausible reasoning path from the generated set, EORM significantly enhances problem-solving capabilities. For instance, when integrated with Llama 3 8B and reranking $n=256$ candidate solutions, EORM achieved high accuracy levels of 90.7\% on GSM8k and 63.7\% on MATH, demonstrating its ability to leverage multiple reasoning attempts effectively. The relationship between the number of candidate samples ($n$) and the resulting accuracy improvement is visualized in \cref{fig:compare}, illustrating how performance generally scales with more samples but strong gains are often achievable with moderate sampling budgets. Beyond quantitative accuracy, qualitative analyses provide deeper insights into EORM's function.

%\begin{table}[]
%    \centering
%    \begin{tabular}{c|c}
%        amc & aime \\
%        -- & --\\
%    \end{tabular}
%    \caption{Caption}
%    \label{tab:my_label}
%\end{table}

\subsection{Out of Distribution Learning}
\label{sec:out_of_distribution}

A critical aspect of evaluating any reasoning model is its ability to generalize to new, unseen problems and potentially different reasoning styles. To rigorously test this, we assessed the performance of EORM (trained exclusively on GSM8k and MATH data) on a suite of OOD benchmarks known for their difficulty: AIME 2024, AMC, and AGIEval's SAT Math and Gaokao Math subsets \citep{zhong2023agieval}. These tasks often require more complex mathematical insights or different problem-solving strategies than those predominant in the training datasets. For this phase, evaluations primarily utilized Llama-3 8B and Qwen-2.5 7B as the base models, generating $n=64$ candidate solutions for each OOD problem instance. The results, documented in \cref{tab:ood_table}, demonstrate EORM's strong generalization capabilities. When applied to the Llama-3 base model, EORM generally yielded superior performance compared to alternative baseline reranking methods like TTRL \citep{zuo2025ttrltesttimereinforcementlearning} and MathWizard \citep{luo2023_wizardmath} across the majority of the evaluated OOD datasets under the same experimental setup. Impressively, EORM achieved significant scores on highly challenging benchmarks such as AIME 2024 (reaching 10.0\%) and the AGIE Gaokao Math problems (achieving 70.3\%). Additionally the Qwen-2.5 base model also find similar finding, and provides further evidence that EORM is not merely overfitting to patterns specific to GSM8k and MATH. Instead, it appears to learn more fundamental, transferable principles related to the structure and validity of mathematical reasoning, allowing it to effectively discern higher-quality solutions even when faced with novel problem types and domains.

\begin{table}[h]
  \centering
  \begin{adjustbox}{width=\linewidth} % Adjusted width for better fit
  \begin{tabular}{lccccc}
    \toprule
    \rowcolor{gray!20} \textbf{Model}   & \textbf{AIME 2024} & \textbf{AMC} & \textbf{SAT Math} & \textbf{Gaokao Math} & \textbf{Avg} \\
    \midrule
    Llama-3 8B                         & 3.3 \greenup{+0.0} & 19.3 \greenup{+0.0} & 77.3\greenup{+0.0} & 48.7\greenup{+0.0} & 37.2 \\
    + TTRL \citep{zuo2025ttrltesttimereinforcementlearning}         & 3.3\greenup{+0.0} & 32.5\greenup{+13.2} & 89.1\greenup{+11.8} & 61.0\greenup{+12.3} & 46.5 \\
    + DART-MATH (Prop2Diff) \citep{tong2024dartmath}   & 6.7\greenup{+3.4} & 26.5\greenup{+7.2} & \textbf{90.5}\greenup{+13.2} & 67.6\greenup{+18.9} & 47.8 \\
    \rowcolor{lightpurple} 
    + \textbf{EORM Ours}   & \textbf{10.0}\greenup{+6.7} & \textbf{28.9}\greenup{+9.6} & \textbf{90.5}\greenup{+13.2} & \textbf{70.3}\greenup{+21.6} & \textbf{49.9} \\
    \midrule
    Qwen‑2.5 7B                          & 16.7\greenup{+0.0} & 53.0\greenup{+0.0} & 91.4\greenup{+0.0} & 83.3\greenup{+0.0} & 61.1 \\
    + TTRL \citep{zuo2025ttrltesttimereinforcementlearning}        & \textbf{43.3}\greenup{+26.6} & 67.5\greenup{+14.5} & 95.0\greenup{+3.6} & 88.2\greenup{+4.9} & 73.5 \\
    \rowcolor{lightpurple}
    + \textbf{EORM Ours}   & \textbf{43.3}\greenup{+26.6} & \textbf{68.7}\greenup{+15.7} & \textbf{96.4}\greenup{+5.0} & \textbf{88.9}\greenup{+5.6} & \textbf{74.3} \\
    \bottomrule
  \end{tabular}
  \end{adjustbox}
  \caption{\textbf{Comparison of EORM with other reasoning methods.} We evaluate EORM against two reasoning baselines, TTRL and MathWizard, using accuracy as the metric across four mathematical datasets. Each method uses 64 samples per question for a fair comparison. EORM consistently achieves the highest performance, with the best Accuracy values highlighted in bold. }
  \label{tab:ood_table}
\end{table}

\subsection{Generalization Capabilities}
\label{sec:generalization}

To further assess \EORM{}'s robustness, we analyze its generalization capabilities when encountering outputs from LLMs not seen during training. This is crucial for real-world applicability where new models are constantly emerging. We test two scenarios: generalization to unseen models on in-distribution tasks, and a more challenging test of generalization to unseen models on out-of-distribution tasks.

\paragraph{Generalization to Unseen Models on In-Distribution Tasks.}
We employ a leave-one-out training strategy where a variant of \EORM{}, termed ``\EORM{} Generalize,'' is trained on CoT data from four of the five LLMs and tested on the held-out model. The results, detailed in \Cref{tab:app_generalization_in_dist_model_ood_latex}, show that this variant consistently improves accuracy over the base performance. For instance, when Llama 3 8B is the held-out model, its base accuracy on GSM8k is 42.9\%. ``\EORM{} Generalize'' elevates this to 76.6\%, while the standard \EORM{} (trained on all models) reaches 90.7\%. The significant uplift from ``\EORM{} Generalize'' demonstrates that our model learns generalizable features of correct reasoning, rather than merely overfitting to the stylistic patterns of the models in its training set. The performance gap between the standard \EORM{} and ``\EORM{} Generalize'' is expected, as the latter lacks exposure to the target LLM's specific error patterns and stylistic nuances. However, the crucial finding is that ``\EORM{} Generalize'' still provides a substantial boost, indicating that it captures fundamental principles of logical consistency and procedural correctness. This ability to transcend the idiosyncrasies of individual LLMs confirms that \EORM{} is learning the underlying structure of valid mathematical arguments.

\begin{table}[h]
  \centering
  \begin{adjustbox}{width=0.975\linewidth}
  \begin{tabular}{lcccc}
    \toprule
        \rowcolor{gray!20} \textbf{Model} & \textbf{Base Model} & \textbf{Params} & \textbf{GSM8k} & \textbf{MATH} \\
        \midrule
        Mistral-v0.1 \citep{jiang2023mistral7b} & - & 7B & 42.9\greenup{+0.0} & 12.9\greenup{+0.0} \\
        \rowcolor{lightpurple} EORM Generalize & Mistral-v0.1 & 7B & 76.0\greenup{+33.1} & 23.0\greenup{+10.1} \\
        EORM & Mistral-v0.1 & 7B & 91.0\greenup{+48.1} & 48.8\greenup{+35.9} \\
        \midrule
        Llama2 \citep{touvron2023llama}& - & 7B & 14.6\greenup{+0.0} & 2.5\greenup{+0.0}  \\
        \rowcolor{lightpurple} EORM Generalize & Llama 2 & 7B & 38.5\greenup{+23.9} & 7.8\greenup{+5.3} \\
        EORM & Llama 2 & 7B & 75.6\greenup{+61.0} & 21.8\greenup{+19.3} \\
        \midrule
        DeepSeekMath-Base \citep{shao2024deepseekmath} & - & 7B & 64.2\greenup{+0.0} & 36.2\greenup{+0.0} \\
        \rowcolor{lightpurple} EORM Generalize & DeepSeekMath & 7B & 81.6\greenup{+17.4} & 41.7\greenup{+5.5} \\
        EORM & DeepSeekMath & 7B & 84.2\greenup{+20.0} & 58.7\greenup{+22.5} \\
        \midrule
        Llama3 \citep{grattafiori2024llama3herdmodels}& - & 8B & 76.6\greenup{+0.0} & 28.9\greenup{+0.0} \\
        \rowcolor{lightpurple} EORM Generalize & Llama 3 & 8B & 76.6\greenup{+0.0} & 51.6\greenup{+31.5} \\
        EORM & Llama 3 & 8B & 90.7\greenup{+47.8} & 63.7\greenup{+43.6} \\
        \midrule
        Qwen2.5 \citep{yang2024qwen2}& - & 7B & 89.5\greenup{+0.0} & 63.4\greenup{+0.0} \\
        \rowcolor{lightpurple} EORM Generalize & Qwen 2.5 & 7B & 90.2\greenup{+0.7} & 63.8\greenup{+0.4} \\
        EORM & Qwen 2.5 & 7B & 92.8\greenup{+3.3} & 65.8\greenup{+2.4} \\
        \bottomrule
    \end{tabular}
    \end{adjustbox}
    \caption{\textbf{Performance on in-distribution benchmarks (GSM8k, MATH) when EORM is trained without exposure to the target model's outputs.} EORM Generalize is trained on CoT data from four LLMs and tested on the fifth (target) LLM. EORM (from Table \ref{tab:user_provided_table_1_filtered}) is trained on data from all five LLMs. Base performance of the target LLM is also provided. Accuracy is reported.}
    \label{tab:app_generalization_in_dist_model_ood_latex}
\end{table}

\paragraph{Generalization to Unseen Models and Tasks (Out-of-Distribution).}
We test a more demanding scenario where the ``\EORM{} Generalize'' variant is evaluated on challenging OOD benchmarks using CoT solutions from the held-out LLM. This setup assesses \EORM{}'s ability to transfer learned principles to both novel tasks and unfamiliar model outputs simultaneously. The results in \Cref{tab:ood_table} show that even under this compounded challenge, our model provides tangible improvements. For example, with Llama 3 8B as the held-out model, ``\EORM{} Generalize'' raises the average OOD accuracy from a base of 37.2\% to 40.5\%. This suggests \EORM{} internalizes fundamental and transferable heuristics about the structure and coherence of valid reasoning that apply broadly. As anticipated, the standard \EORM{}—which benefited from seeing the target model's in-distribution outputs during training—achieves superior OOD performance. Nevertheless, the fact that ``\EORM{} Generalize'' still delivers tangible improvements in this dual-unseen scenario is a strong testament to its robustness. This finding highlights that the energy functions learned by \EORM{} are not merely memorizing solution patterns but are instead capturing more abstract, transferable markers of high-quality reasoning applicable across diverse problem domains.

\begin{table}[ht]
  \centering
  \begin{adjustbox}{width=\linewidth}
  \begin{tabular}{lccccc}
    \toprule
    \rowcolor{gray!20} \textbf{Model}     & \textbf{AIME 2024} & \textbf{AMC} & \textbf{SAT Math} & \textbf{Gaokao Math} & \textbf{Avg} \\
    \midrule
    Llama-3 8B \citep{grattafiori2024llama3herdmodels}         & 3.3\greenup{+0.0} & 19.3\greenup{+0.0} & 77.3\greenup{+0.0} & 48.7\greenup{+0.0} & 37.2 \\
    \rowcolor{lightpurple} EORM Generalized     & 6.7\greenup{+3.4} & 20.5\greenup{+1.2} & 82.3\greenup{+5.0} & 52.6\greenup{+3.9} & 40.5 \\
    {EORM}  & 10.0\greenup{+6.7} & 28.9\greenup{+9.6} & 90.5\greenup{+13.2} & 70.3\greenup{+21.6} & 49.9\\
    \midrule
    Qwen‑2.5 7B  \citep{yang2024qwen2}      & 16.7\greenup{+0.0} & 53.0\greenup{+0.0} & 91.4\greenup{+0.0} & 83.3\greenup{+0.0} & 61.1 \\
    \rowcolor{lightpurple}   EORM Generalized      & 26.7\greenup{+10.0} & 54.2\greenup{+1.2} & 92.3\greenup{+0.9} & 84.5\greenup{+1.2} & 64.4 \\
    {EORM}  & 43.3\greenup{+26.6} & 68.7\greenup{+15.7} & 96.4\greenup{+5.0} & 88.9\greenup{+5.6} & 74.3\\
    \bottomrule
  \end{tabular}
  \end{adjustbox}
  \caption{\textbf{Comparison of EORM with against baseline for out-of distribution problem and answer.} We gray shaded the generalized result.}
  \label{tab:ood_table}
\end{table}

\section{Ablation Studies}
\label{sec:ablation}

To validate the contributions of key components within the EORM framework, we conducted a series of ablation studies, with results summarized in Figure~\ref{fig:ablation}. We investigated two primary aspects: the choice of model architecture and the impact of the tokenizer. We first assessed the importance of the Transformer architecture by comparing our standard EORM against a simpler baseline where the Transformer encoder was replaced with a standard Multi-Layer Perceptron (MLP) verifier. As shown in Figure~\ref{fig:ablation} (left), the Transformer-based EORM significantly outperforms the MLP variant, achieving accuracies of 90.7\% on GSM8k and 63.7\% on MATH, compared to the MLP's 82.6\% and 52.1\%, respectively. This highlights that the Transformer's ability to model sequential dependencies and long-range relationships is crucial for effectively capturing the logical structure inherent in Chain-of-Thought reasoning paths. Next, we examined EORM's sensitivity to the tokenizer by training it with two different tokenizers: a universal GPT-2 tokenizer and the native tokenizer from the Llama 3 model family. The results, presented in Figure~\ref{fig:ablation} (right), reveal that the choice of tokenizer has a negligible impact on performance. The model achieves nearly identical scores on both GSM8k (86.6\% vs. 86.8\%) and MATH (61.8\% for both). This robustness suggests that EORM learns fundamental, transferable features of valid reasoning that are not dependent on specific tokenization schemes, underscoring its generalizability.

\begin{wrapfigure}[17]{r}{0.65\textwidth}
    \vspace{-3.5em}
    \centering
\includegraphics[width=0.65\textwidth]{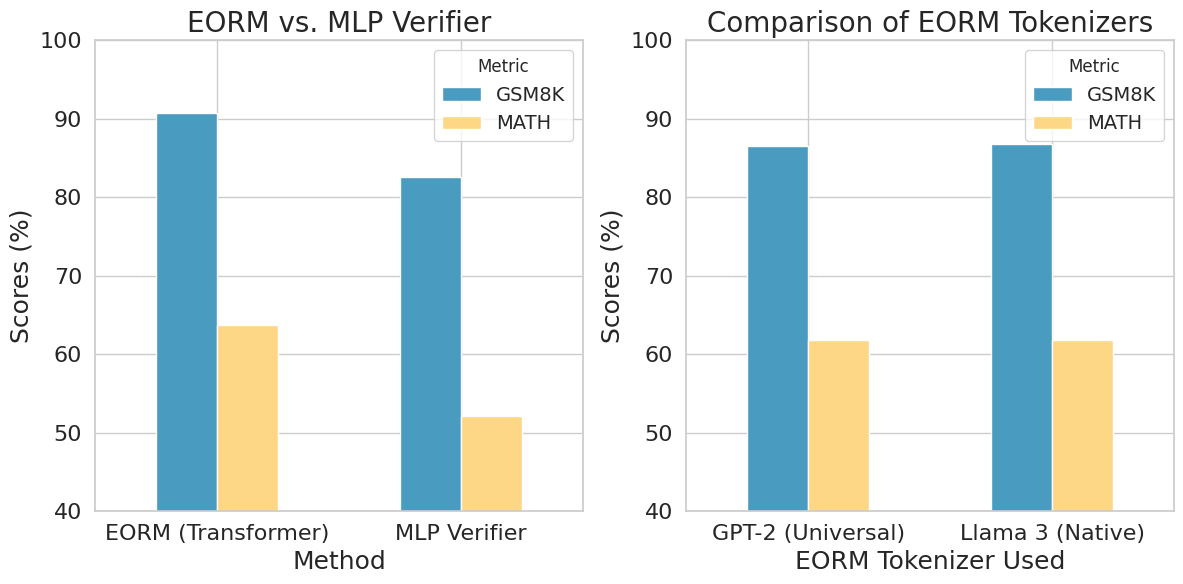} 
\captionsetup{font=small}     
\caption{\textbf{Ablation studies on key components of EORM.} \textbf{(Left)} Performance comparison between the Transformer-based EORM and a simpler MLP verifier on the GSM8k and MATH benchmarks. \textbf{(Right)} Impact of using a universal (GPT-2) versus a native (Llama 3) tokenizer. The results highlight the critical role of the Transformer architecture and demonstrate the model's robustness to the choice of tokenizer.}
    \label{fig:ablation}
    \vspace{-3.2em}
\end{wrapfigure}

\section{Conclusion}
\label{sec:conclusion}

We introduced EORM, a lightweight energy-based model for post-hoc verification of Chain-of-Thought reasoning. Trained solely on outcome labels, EORM efficiently reranks candidate solutions to significantly boost mathematical reasoning performance across various benchmarks. Its effectiveness, combined with a parameter count over 127x smaller than typical reward models, presents a practical path toward more dependable and accessible LLMs for complex reasoning.

\newpage

\bibliography{iclr2026_conference}

\begin{thebibliography}{68}
\providecommand{\natexlab}[1]{#1}
\providecommand{\url}[1]{\texttt{#1}}
\expandafter\ifx\csname urlstyle\endcsname\relax
  \providecommand{\doi}[1]{doi: #1}\else
  \providecommand{\doi}{doi: \begingroup \urlstyle{rm}\Url}\fi

\bibitem[Bai et~al.(2023)Bai, Bai, Chu, Cui, Dang, Deng, Fan, Ge, Han, Huang, et~al.]{bai2023qwen}
Jinze Bai, Shuai Bai, Yunfei Chu, Zeyu Cui, Kai Dang, Xiaodong Deng, Yang Fan, Wenbin Ge, Yu~Han, Fei Huang, et~al.
\newblock Qwen technical report.
\newblock \emph{arXiv preprint arXiv:2309.16609}, 2023.

\bibitem[Cobbe et~al.(2021{\natexlab{a}})Cobbe, Hilton, and Schulman]{cobbe2021_gsm8k}
Karl Cobbe, Jacob Hilton, and John Schulman.
\newblock Training verifiers to solve math word problems.
\newblock \emph{arXiv preprint arXiv:2110.14168}, 2021{\natexlab{a}}.

\bibitem[Cobbe et~al.(2021{\natexlab{b}})Cobbe, Kosaraju, Bavarian, Chen, Jun, Kaiser, Plappert, Tworek, Hilton, Nakano, Hesse, and Schulman]{cobbe2021trainingverifierssolvemath}
Karl Cobbe, Vineet Kosaraju, Mohammad Bavarian, Mark Chen, Heewoo Jun, Lukasz Kaiser, Matthias Plappert, Jerry Tworek, Jacob Hilton, Reiichiro Nakano, Christopher Hesse, and John Schulman.
\newblock Training verifiers to solve math word problems, 2021{\natexlab{b}}.

\bibitem[Cobbe et~al.(2023)Cobbe, Druck, Kirubarajan, , et~al.]{cobbe2023_processsupervision}
Karl Cobbe, Gregory Druck, Anand Kirubarajan, , et~al.
\newblock Improving mathematical reasoning with process supervision.
\newblock 2023.

\bibitem[Deng et~al.(2020{\natexlab{a}})Deng, Liu, He, Chen, and Dai]{deng2020_residualEBM}
Yunfu Deng, Qian Liu, Xiaodong He, Zihan Chen, and Hongchao Dai.
\newblock Residual energy-based models for text generation.
\newblock In \emph{International Conference on Learning Representations (ICLR)}, 2020{\natexlab{a}}.

\bibitem[Deng et~al.(2020{\natexlab{b}})Deng, Bakhtin, Ott, Szlam, and Ranzato]{deng2020residualenergybasedmodelstext}
Yuntian Deng, Anton Bakhtin, Myle Ott, Arthur Szlam, and Marc'Aurelio Ranzato.
\newblock Residual energy-based models for text generation, 2020{\natexlab{b}}.

\bibitem[Du \& Mordatch(2019)Du and Mordatch]{du2020implicit}
Yilun Du and Igor Mordatch.
\newblock Implicit generation and modeling with energy-based models.
\newblock \emph{Advances in Neural Information Processing Systems}, 32, 2019.

\bibitem[Du \& Mordatch(2020)Du and Mordatch]{du2020implicitgenerationgeneralizationenergybased}
Yilun Du and Igor Mordatch.
\newblock Implicit generation and generalization in energy-based models, 2020.

\bibitem[Grathwohl et~al.(2019)Grathwohl, Wang, Jacobsen, Duvenaud, Norouzi, and Swersky]{grathwohl2019classifier}
Will Grathwohl, Kuan-Chieh Wang, J{\"o}rn-Henrik Jacobsen, David Duvenaud, Mohammad Norouzi, and Kevin Swersky.
\newblock Your classifier is secretly an energy based model and you should treat it like one.
\newblock \emph{arXiv preprint arXiv:1912.03263}, 2019.

\bibitem[Grattafiori et~al.(2024)Grattafiori, Dubey, Jauhri, Pandey, Kadian, Al-Dahle, Letman, Mathur, Schelten, Vaughan, et~al.]{grattafiori2024llama}
Aaron Grattafiori, Abhimanyu Dubey, Abhinav Jauhri, Abhinav Pandey, Abhishek Kadian, Ahmad Al-Dahle, Aiesha Letman, Akhil Mathur, Alan Schelten, Alex Vaughan, et~al.
\newblock The llama 3 herd of models.
\newblock \emph{arXiv preprint arXiv:2407.21783}, 2024.

\bibitem[Guo et~al.(2025)Guo, Yang, Zhang, Song, Zhang, Xu, Zhu, Ma, Wang, Bi, et~al.]{guo2025deepseek}
Daya Guo, Dejian Yang, Haowei Zhang, Junxiao Song, Ruoyu Zhang, Runxin Xu, Qihao Zhu, Shirong Ma, Peiyi Wang, Xiao Bi, et~al.
\newblock Deepseek-r1: Incentivizing reasoning capability in llms via reinforcement learning.
\newblock \emph{arXiv preprint arXiv:2501.12948}, 2025.

\bibitem[Hendrycks et~al.(2021)Hendrycks, Burns, Chen, , et~al.]{hendrycks2021_math}
Dan Hendrycks, Colin Burns, Spencer Chen, , et~al.
\newblock Measuring mathematical problem solving with the {MATH} dataset.
\newblock In \emph{Advances in Neural Information Processing Systems (NeurIPS)}, volume~34, pp.\  3512--3524, 2021.

\bibitem[Hosseini et~al.(2024)Hosseini, Yuan, Malkin, Courville, Sordoni, and Agarwal]{hosseini2024vstartrainingverifiersselftaught}
Arian Hosseini, Xingdi Yuan, Nikolay Malkin, Aaron Courville, Alessandro Sordoni, and Rishabh Agarwal.
\newblock V-star: Training verifiers for self-taught reasoners, 2024.

\bibitem[Huang et~al.(2022)Huang, Abbeel, Pathak, and Mordatch]{huang2022language}
Wenlong Huang, Pieter Abbeel, Deepak Pathak, and Igor Mordatch.
\newblock Language models as zero-shot planners: Extracting actionable knowledge for embodied agents.
\newblock In \emph{International Conference on Machine Learning}, pp.\  9118--9147. PMLR, 2022.

\bibitem[Huang et~al.(2024)Huang, Liu, Gong, Gou, Shen, Duan, and Chen]{huang2024keypointdrivendatasynthesisenhancement}
Yiming Huang, Xiao Liu, Yeyun Gong, Zhibin Gou, Yelong Shen, Nan Duan, and Weizhu Chen.
\newblock Key-point-driven data synthesis with its enhancement on mathematical reasoning, 2024.

\bibitem[Jacovi et~al.(2024)Jacovi, Mishra, Napolitano, , et~al.]{jacovi2024_reveal}
Alon Jacovi, Swaroop Mishra, Giovanni Napolitano, , et~al.
\newblock Reveal: A benchmark for step-level reasoning verification.
\newblock In \emph{Proceedings of the 62nd Annual Meeting of the Association for Computational Linguistics (ACL)}, pp.\  10042--10063, Bangkok, Thailand, 2024.

\bibitem[Jiang et~al.(2023)Jiang, Sablayrolles, Mensch, Bamford, Chaplot, de~las Casas, Bressand, Lengyel, Lample, Saulnier, Lavaud, Lachaux, Stock, Scao, Lavril, Wang, Lacroix, and Sayed]{jiang2023mistral7b}
Albert~Q. Jiang, Alexandre Sablayrolles, Arthur Mensch, Chris Bamford, Devendra~Singh Chaplot, Diego de~las Casas, Florian Bressand, Gianna Lengyel, Guillaume Lample, Lucile Saulnier, Lélio~Renard Lavaud, Marie-Anne Lachaux, Pierre Stock, Teven~Le Scao, Thibaut Lavril, Thomas Wang, Timothée Lacroix, and William~El Sayed.
\newblock Mistral 7b, 2023.

\bibitem[Jiang et~al.(2024)Jiang, Zhang, Zhang, Lizarraga, Xu, Zhang, Zhao, Xu, Yu, Tang, Kong, and Wu]{jiang2024dodtenhancedonlinedecision}
Eric~Hanchen Jiang, Zhi Zhang, Dinghuai Zhang, Andrew Lizarraga, Chenheng Xu, Yasi Zhang, Siyan Zhao, Zhengjie Xu, Peiyu Yu, Yuer Tang, Deqian Kong, and Ying~Nian Wu.
\newblock Dodt: Enhanced online decision transformer learning through dreamer's actor-critic trajectory forecasting, 2024.
\newblock URL \url{https://arxiv.org/abs/2410.11359}.

\bibitem[Karakida et~al.(2016)Karakida, Okada, and Amari]{karakida2016dynamical}
Ryo Karakida, Masato Okada, and Shun-ichi Amari.
\newblock Dynamical analysis of contrastive divergence learning: Restricted boltzmann machines with gaussian visible units.
\newblock \emph{Neural Networks}, 79:\penalty0 78--87, 2016.

\bibitem[Khalifa et~al.(2023)Khalifa, Alnasrallah, Demuynck, , et~al.]{khalifa2023_grace}
Nadir Khalifa, Abdullah Alnasrallah, Joke Demuynck, , et~al.
\newblock {GRACE}: Discriminator-guided chain-of-thought reasoning.
\newblock In \emph{Proceedings of the 2023 Conference on Empirical Methods in Natural Language Processing (EMNLP)}, pp.\  11710--11725, Singapore, 2023.

\bibitem[Khattab et~al.(2022)Khattab, Santhanam, Li, Hall, Liang, Potts, and Zaharia]{khattab2022demonstrate}
Omar Khattab, Keshav Santhanam, Xiang~Lisa Li, David Hall, Percy Liang, Christopher Potts, and Matei Zaharia.
\newblock Demonstrate-search-predict: Composing retrieval and language models for knowledge-intensive {NLP}.
\newblock \emph{arXiv preprint arXiv:2212.14024}, 2022.

\bibitem[Kojima et~al.(2022)Kojima, Gu, Reid, Matsuo, and Iwasawa]{kojima2022large}
Takeshi Kojima, Shixiang~Shane Gu, Machel Reid, Yutaka Matsuo, and Yusuke Iwasawa.
\newblock Large language models are zero-shot reasoners.
\newblock \emph{Advances in neural information processing systems}, 35:\penalty0 22199--22213, 2022.

\bibitem[Kratsios(2021)]{Kratsios_2021}
Anastasis Kratsios.
\newblock The universal approximation property: Characterization, construction, representation, and existence.
\newblock \emph{Annals of Mathematics and Artificial Intelligence}, 89\penalty0 (5–6):\penalty0 435–469, January 2021.
\newblock ISSN 1573-7470.
\newblock URL \url{http://dx.doi.org/10.1007/s10472-020-09723-1}.

\bibitem[LeCun et~al.(2006)LeCun, Chopra, and Hadsell]{lecun2006_tutorialEBM}
Yann LeCun, Sumit Chopra, and Raia Hadsell.
\newblock A tutorial on energy-based learning.
\newblock In \emph{Predicting Structured Data}, pp.\  1--59. MIT Press, 2006.

\bibitem[Li et~al.(2024{\natexlab{a}})Li, Wang, Hu, Wei, Zheng, Hu, Zhang, and Peng]{li2024common7blanguagemodels}
Chen Li, Weiqi Wang, Jingcheng Hu, Yixuan Wei, Nanning Zheng, Han Hu, Zheng Zhang, and Houwen Peng.
\newblock Common 7b language models already possess strong math capabilities, 2024{\natexlab{a}}.

\bibitem[Li et~al.(2024{\natexlab{b}})Li, Beeching, Tunstall, Lipkin, Soletskyi, Huang, Rasul, Yu, Jiang, Shen, et~al.]{numina_math_datasets}
Jia Li, Edward Beeching, Lewis Tunstall, Ben Lipkin, Roman Soletskyi, Shengyi Huang, Kashif Rasul, Longhui Yu, Albert~Q Jiang, Ziju Shen, et~al.
\newblock Numinamath: The largest public dataset in ai4maths with 860k pairs of competition math problems and solutions.
\newblock \emph{Hugging Face repository}, 13\penalty0 (9):\penalty0 9, 2024{\natexlab{b}}.

\bibitem[Li et~al.(2023)Li, Pan, Zhang, , et~al.]{li2023_diverse}
Shuohang Li, Liangming Pan, Jinpeng Zhang, , et~al.
\newblock Diverse: Step-aware verifier for chain-of-thought reasoning.
\newblock \emph{arXiv preprint arXiv:2306.04637}, 2023.

\bibitem[Lin et~al.(2025)Lin, Tang, Yao, Yin, Hu, Sun, and Chang]{lin2025qlass}
Zongyu Lin, Yao Tang, Xingcheng Yao, Da~Yin, Ziniu Hu, Yizhou Sun, and Kai-Wei Chang.
\newblock Qlass: Boosting language agent inference via q-guided stepwise search.
\newblock \emph{arXiv preprint arXiv:2502.02584}, 2025.

\bibitem[Liu et~al.(2025)Liu, Hu, Song, and Liu]{liu2025attentionmechanismmaxaffinepartition}
Hude Liu, Jerry Yao-Chieh Hu, Zhao Song, and Han Liu.
\newblock Attention mechanism, max-affine partition, and universal approximation, 2025.
\newblock URL \url{https://arxiv.org/abs/2504.19901}.

\bibitem[Liu(2009)]{liu2009learning}
Tie-Yan Liu.
\newblock Learning to rank for information retrieval.
\newblock In \emph{Foundations and Trends® in Information Retrieval}, volume~3, pp.\  225--331. Now Publishers Inc., 2009.

\bibitem[Liu et~al.(2020)Liu, Wang, Owens, and Li]{liu2020energy}
Weitang Liu, Xiaoyun Wang, John~D Owens, and Yixuan Li.
\newblock Energy-based out-of-distribution detection.
\newblock \emph{Advances in Neural Information Processing Systems}, 33:\penalty0 21464--21475, 2020.

\bibitem[Liu et~al.(2023)Liu, Peng, Zhang, , et~al.]{liu2023_mmiqc}
Xiang Liu, Baolin Peng, Yichong Zhang, , et~al.
\newblock {MMIQC}: Multi-modal in-context learning for math reasoning.
\newblock \emph{arXiv preprint arXiv:2407.13690}, 2023.

\bibitem[Luo et~al.(2025)Luo, Sun, Xu, Zhao, Lou, Tao, Geng, Lin, Chen, Tang, and Zhang]{luo2023_wizardmath}
Haipeng Luo, Qingfeng Sun, Can Xu, Pu~Zhao, Jian-Guang Lou, Chongyang Tao, Xiubo Geng, Qingwei Lin, Shifeng Chen, Yansong Tang, and Dongmei Zhang.
\newblock Wizardmath: Empowering mathematical reasoning for large language models via reinforced evol-instruct.
\newblock In \emph{The Thirteenth International Conference on Learning Representations}, 2025.

\bibitem[Lyu et~al.(2025)Lyu, Gao, Gu, Zhang, Gao, Liu, Wang, Li, Zhao, Huang, et~al.]{lyu2025exploring}
Chengqi Lyu, Songyang Gao, Yuzhe Gu, Wenwei Zhang, Jianfei Gao, Kuikun Liu, Ziyi Wang, Shuaibin Li, Qian Zhao, Haian Huang, et~al.
\newblock Exploring the limit of outcome reward for learning mathematical reasoning.
\newblock \emph{arXiv preprint arXiv:2502.06781}, 2025.

\bibitem[OpenAI(2023)]{openai2023gpt4}
OpenAI.
\newblock Gpt-4 technical report, 2023.

\bibitem[Pan et~al.(2025)Pan, Luo, Li, and Liu]{pan2025chainofaction}
Zhenyu Pan, Haozheng Luo, Manling Li, and Han Liu.
\newblock Chain-of-action: Faithful and multimodal question answering through large language models.
\newblock In \emph{The Thirteenth International Conference on Learning Representations}, 2025.

\bibitem[Pang et~al.(2024)Pang, Yuan, Cho, He, Sukhbaatar, and Weston]{NEURIPS2024_d37c9ad4}
Richard~Yuanzhe Pang, Weizhe Yuan, Kyunghyun Cho, He~He, Sainbayar Sukhbaatar, and Jason Weston.
\newblock Iterative reasoning preference optimization.
\newblock In A.~Globerson, L.~Mackey, D.~Belgrave, A.~Fan, U.~Paquet, J.~Tomczak, and C.~Zhang (eds.), \emph{Advances in Neural Information Processing Systems}, volume~37, pp.\  116617--116637. Curran Associates, Inc., 2024.

\bibitem[Press et~al.(2022)Press, Zhang, Min, Schmidt, Smith, and Lewis]{press2022measuring}
Ofir Press, Muru Zhang, Sewon Min, Ludwig Schmidt, Noah~A Smith, and Mike Lewis.
\newblock Measuring and narrowing the compositionality gap in language models.
\newblock \emph{arXiv preprint arXiv:2210.03350}, 2022.

\bibitem[Radford et~al.(2019)Radford, Wu, Child, Luan, Amodei, Sutskever, et~al.]{radford2019language}
Alec Radford, Jeffrey Wu, Rewon Child, David Luan, Dario Amodei, Ilya Sutskever, et~al.
\newblock Language models are unsupervised multitask learners.
\newblock \emph{OpenAI blog}, 1\penalty0 (8):\penalty0 9, 2019.

\bibitem[Rakin et~al.(2024)Rakin, Shibly, Hossain, Khan, and Akbar]{rakin2024leveraging}
Salman Rakin, Md~AR Shibly, Zahin~M Hossain, Zeeshan Khan, and Md~Mostofa Akbar.
\newblock Leveraging the domain adaptation of retrieval augmented generation models for question answering and reducing hallucination.
\newblock \emph{arXiv preprint arXiv:2410.17783}, 2024.

\bibitem[Schick et~al.(2023)Schick, Dwivedi-Yu, Dess{\`\i}, Raileanu, Lomeli, Zettlemoyer, Cancedda, and Scialom]{schick2023toolformer}
Timo Schick, Jane Dwivedi-Yu, Roberto Dess{\`\i}, Roberta Raileanu, Maria Lomeli, Luke Zettlemoyer, Nicola Cancedda, and Thomas Scialom.
\newblock Toolformer: Language models can teach themselves to use tools.
\newblock \emph{arXiv preprint arXiv:2302.04761}, 2023.

\bibitem[Shao et~al.(2024)Shao, Wang, Zhu, Xu, Song, Bi, Zhang, Zhang, Li, Wu, et~al.]{shao2024deepseekmath}
Zhihong Shao, Peiyi Wang, Qihao Zhu, Runxin Xu, Junxiao Song, Xiao Bi, Haowei Zhang, Mingchuan Zhang, YK~Li, Y~Wu, et~al.
\newblock Deepseekmath: Pushing the limits of mathematical reasoning in open language models.
\newblock \emph{arXiv preprint arXiv:2402.03300}, 2024.

\bibitem[She et~al.(2025)She, Liu, Liu, Chen, Huang, and Huang]{she2025rprmreasoningdrivenprocessreward}
Shuaijie She, Junxiao Liu, Yifeng Liu, Jiajun Chen, Xin Huang, and Shujian Huang.
\newblock R-prm: Reasoning-driven process reward modeling, 2025.

\bibitem[Song et~al.(2021)Song, Zhao, Du, and Qi]{song2021trainable}
Lin Song, Kuan Zhao, Yilun Du, and Jun Qi.
\newblock Trainable energy-based models for solvable lattice models.
\newblock \emph{arXiv preprint arXiv:2101.03288}, 2021.

\bibitem[Tang et~al.(2024)Tang, Zhang, Wang, and Wei]{tang2024mathscalescalinginstructiontuning}
Zhengyang Tang, Xingxing Zhang, Benyou Wang, and Furu Wei.
\newblock Mathscale: Scaling instruction tuning for mathematical reasoning, 2024.

\bibitem[Team(2024)]{grattafiori2024llama3herdmodels}
Llama Team.
\newblock The llama 3 herd of models, 2024.
\newblock URL \url{https://arxiv.org/abs/2407.21783}.

\bibitem[Toh et~al.(2025)Toh, Ghosal, and Poria]{toh2025not}
Vernon Toh, Deepanway Ghosal, and Soujanya Poria.
\newblock Not all votes count! translated program for verification improves self-consistency of language models for math reasoning.
\newblock In \emph{2nd AI for Math Workshop @ ICML 2025}, 2025.

\bibitem[Tong et~al.(2024)Tong, Zhang, Wang, Wu, and He]{tong2024dartmath}
Yuxuan Tong, Xiwen Zhang, Rui Wang, Ruidong Wu, and Junxian He.
\newblock {DART}-math: Difficulty-aware rejection tuning for mathematical problem-solving.
\newblock In \emph{The Thirty-eighth Annual Conference on Neural Information Processing Systems}, 2024.
\newblock URL \url{https://openreview.net/forum?id=zLU21oQjD5}.

\bibitem[Touvron et~al.(2023)Touvron, Martin, Stone, Albert, Almahairi, Babaei, Bashlykov, Batra, Bhargava, Bhosale, et~al.]{touvron2023llama}
Hugo Touvron, Louis Martin, Kevin Stone, Peter Albert, Amjad Almahairi, Yasmine Babaei, Nikolay Bashlykov, Soumya Batra, Prajjwal Bhargava, Shruti Bhosale, et~al.
\newblock Llama 2: Open foundation and fine-tuned chat models.
\newblock \emph{arXiv preprint arXiv:2307.09288}, 2023.

\bibitem[Vaswani et~al.(2017)Vaswani, Shazeer, Parmar, Uszkoreit, Jones, Gomez, Kaiser, and Polosukhin]{vaswani2017attention}
Ashish Vaswani, Noam Shazeer, Niki Parmar, Jakob Uszkoreit, Llion Jones, Aidan~N Gomez, {\L}ukasz Kaiser, and Illia Polosukhin.
\newblock Attention is all you need.
\newblock In \emph{Advances in neural information processing systems}, pp.\  5998--6008, 2017.

\bibitem[Wang et~al.(2024)Wang, Li, Shao, Xu, Dai, Li, Chen, Wu, and Sui]{wang-etal-2024-math}
Peiyi Wang, Lei Li, Zhihong Shao, Runxin Xu, Damai Dai, Yifei Li, Deli Chen, Yu~Wu, and Zhifang Sui.
\newblock Math-shepherd: Verify and reinforce {LLM}s step-by-step without human annotations.
\newblock In Lun-Wei Ku, Andre Martins, and Vivek Srikumar (eds.), \emph{Proceedings of the 62nd Annual Meeting of the Association for Computational Linguistics (Volume 1: Long Papers)}, pp.\  9426--9439, Bangkok, Thailand, August 2024. Association for Computational Linguistics.
\newblock \doi{10.18653/v1/2024.acl-long.510}.

\bibitem[Wang et~al.(2025)Wang, Gao, Chen, Chen, Zhu, Zhao, Liu, Cao, Ye, Zhu, et~al.]{wang2025visualprm}
Weiyun Wang, Zhangwei Gao, Lianjie Chen, Zhe Chen, Jinguo Zhu, Xiangyu Zhao, Yangzhou Liu, Yue Cao, Shenglong Ye, Xizhou Zhu, et~al.
\newblock Visualprm: An effective process reward model for multimodal reasoning.
\newblock \emph{arXiv preprint arXiv:2503.10291}, 2025.

\bibitem[Wang et~al.(2023)Wang, Wei, Schuurmans, Le, Chi, Narang, Chowdhery, and Zhou]{wang2022self}
Xuezhi Wang, Jason Wei, Dale Schuurmans, Quoc~V Le, Ed~H. Chi, Sharan Narang, Aakanksha Chowdhery, and Denny Zhou.
\newblock Self-consistency improves chain of thought reasoning in language models.
\newblock In \emph{The Eleventh International Conference on Learning Representations}, 2023.

\bibitem[Wei et~al.(2022)Wei, Wang, Schuurmans, Bosma, brian ichter, Xia, Chi, Le, and Zhou]{wei2022chain}
Jason Wei, Xuezhi Wang, Dale Schuurmans, Maarten Bosma, brian ichter, Fei Xia, Ed~H. Chi, Quoc~V Le, and Denny Zhou.
\newblock Chain of thought prompting elicits reasoning in large language models.
\newblock In Alice~H. Oh, Alekh Agarwal, Danielle Belgrave, and Kyunghyun Cho (eds.), \emph{Advances in Neural Information Processing Systems}, 2022.

\bibitem[Wu et~al.(2024)Wu, Singh, Andrus, Yang, Mourad, , et~al.]{wu2024_selectivefilter}
Haoran Wu, Avishkar~Balika Singh, McKay Andrus, Bohyung Yang, Abdelrahman Mourad, , et~al.
\newblock Mitigating misleading chain-of-thought reasoning with selective filtering.
\newblock \emph{arXiv preprint arXiv:2403.19167}, 2024.

\bibitem[Xiong et~al.(2024)Xiong, Payani, Kompella, and Fekri]{xiong2024large}
Siheng Xiong, Ali Payani, Ramana Kompella, and Faramarz Fekri.
\newblock Large language models can learn temporal reasoning.
\newblock \emph{arXiv preprint arXiv:2401.06853}, 2024.

\bibitem[Xu et~al.(2023)Xu, Pang, Shen, Cheng, and Chua]{xu2023search}
Shicheng Xu, Liang Pang, Huawei Shen, Xueqi Cheng, and Tat-seng Chua.
\newblock Search-in-the-chain: Towards the accurate, credible and traceable content generation for complex knowledge-intensive tasks.
\newblock \emph{arXiv preprint arXiv:2304.14732}, 2023.

\bibitem[Yang et~al.(2024)Yang, Yang, Zhang, Hui, Zheng, Yu, Li, Liu, Huang, Wei, et~al.]{yang2024qwen2}
An~Yang, Baosong Yang, Beichen Zhang, Binyuan Hui, Bo~Zheng, Bowen Yu, Chengyuan Li, Dayiheng Liu, Fei Huang, Haoran Wei, et~al.
\newblock Qwen2. 5 technical report.
\newblock \emph{arXiv preprint arXiv:2412.15115}, 2024.

\bibitem[Yao et~al.(2022)Yao, Zhao, Yu, Du, Shafran, Narasimhan, and Cao]{yao2022react}
Shunyu Yao, Jeffrey Zhao, Dian Yu, Nan Du, Izhak Shafran, Karthik Narasimhan, and Yuan Cao.
\newblock React: Synergizing reasoning and acting in language models.
\newblock \emph{arXiv preprint arXiv:2210.03629}, 2022.

\bibitem[Yao et~al.(2023)Yao, Yu, Zhao, Shafran, Griffiths, Cao, and Narasimhan]{yao2023tree}
Shunyu Yao, Dian Yu, Jeffrey Zhao, Izhak Shafran, Thomas~L Griffiths, Yuan Cao, and Karthik Narasimhan.
\newblock Tree of thoughts: Deliberate problem solving with large language models.
\newblock \emph{arXiv preprint arXiv:2305.10601}, 2023.

\bibitem[Yu et~al.(2024)Yu, Jiang, Shi, YU, Liu, Zhang, Kwok, Li, Weller, and Liu]{yu2024metamath}
Longhui Yu, Weisen Jiang, Han Shi, Jincheng YU, Zhengying Liu, Yu~Zhang, James Kwok, Zhenguo Li, Adrian Weller, and Weiyang Liu.
\newblock Metamath: Bootstrap your own mathematical questions for large language models.
\newblock In \emph{The Twelfth International Conference on Learning Representations}, 2024.

\bibitem[Yue et~al.(2024)Yue, Zheng, Zhang, and Chen]{yue2024mammoth}
Xiang Yue, Tianyu Zheng, Ge~Zhang, and Wenhu Chen.
\newblock {MA}mmo{TH}2: Scaling instructions from the web.
\newblock In \emph{The Thirty-eighth Annual Conference on Neural Information Processing Systems}, 2024.

\bibitem[Zhang et~al.(2024)Zhang, Du, Pang, Liu, Gao, and Lin]{zhang2024chain}
Xuan Zhang, Chao Du, Tianyu Pang, Qian Liu, Wei Gao, and Min Lin.
\newblock Chain of preference optimization: Improving chain-of-thought reasoning in llms.
\newblock \emph{Advances in Neural Information Processing Systems}, 37:\penalty0 333--356, 2024.

\bibitem[Zhang et~al.(2023)Zhang, Zhang, Li, and Smola]{zhang2023automatic}
Zhuosheng Zhang, Aston Zhang, Mu~Li, and Alex Smola.
\newblock Automatic chain of thought prompting in large language models.
\newblock In \emph{The Eleventh International Conference on Learning Representations (ICLR 2023)}, 2023.

\bibitem[Zhong et~al.(2023)Zhong, Cui, Guo, Liang, Lu, Wang, Saied, Chen, and Duan]{zhong2023agieval}
Wanjun Zhong, Ruixiang Cui, Yiduo Guo, Yaobo Liang, Shuai Lu, Yanlin Wang, Amin Saied, Weizhu Chen, and Nan Duan.
\newblock Agieval: A human-centric benchmark for evaluating foundation models.
\newblock \emph{arXiv preprint arXiv:2304.06364}, 2023.

\bibitem[Zhou et~al.(2022)Zhou, Sch{\"a}rli, Hou, Wei, Scales, Wang, Schuurmans, Cui, Bousquet, Le, et~al.]{zhou2022least}
Denny Zhou, Nathanael Sch{\"a}rli, Le~Hou, Jason Wei, Nathan Scales, Xuezhi Wang, Dale Schuurmans, Claire Cui, Olivier Bousquet, Quoc Le, et~al.
\newblock Least-to-most prompting enables complex reasoning in large language models.
\newblock \emph{arXiv preprint arXiv:2205.10625}, 2022.

\bibitem[Zhou et~al.(2024)Zhou, Zhang, jiapeng wang, Chen, Zhao, Sha, Sheng, Wang, and Wen]{zhou2024jiuzhang}
Kun Zhou, Beichen Zhang, jiapeng wang, Zhipeng Chen, Xin Zhao, Jing Sha, Zhichao Sheng, Shijin Wang, and Ji-Rong Wen.
\newblock Jiuzhang3.0: Efficiently improving mathematical reasoning by training small data synthesis models.
\newblock In \emph{The Thirty-eighth Annual Conference on Neural Information Processing Systems}, 2024.

\bibitem[Zuo et~al.(2025)Zuo, Zhang, Qu, Sheng, Zhu, Qi, Sun, Cui, Ding, and Zhou]{zuo2025ttrltesttimereinforcementlearning}
Yuxin Zuo, Kaiyan Zhang, Shang Qu, Li~Sheng, Xuekai Zhu, Biqing Qi, Youbang Sun, Ganqu Cui, Ning Ding, and Bowen Zhou.
\newblock Ttrl: Test-time reinforcement learning, 2025.
\newblock URL \url{https://arxiv.org/abs/2504.16084}.

\end{thebibliography}
\bibliographystyle{iclr2026_conference}

\newpage

\appendix
\section{Algorithm}

\begin{algorithm}[h]
\caption{Training \EORM\ using Pairwise Bradley-Terry Loss}
\label{alg:ebm_training_bradley_terry_methodology}
\begin{algorithmic}
\REQUIRE Labeled groups $\{\mathcal{Y}_n\}_{n=1}^N$; learning rate $\eta$; number of epochs $E$
\STATE Initialize EBM parameters $\theta$
\FOR{epoch = 1 to $E$}
    \FOR{each group $\mathcal{Y}_n$ in the training set}
        \STATE $\mathcal{Y}_+ \leftarrow \{\text{positives in }\mathcal{Y}_n\},\quad \mathcal{Y}_- \leftarrow \{\text{negatives in }\mathcal{Y}_n\}$
        \IF{$\mathcal{Y}_+$ and $\mathcal{Y}_-$ are both non-empty}
            \STATE Compute energies $E_{\theta}(y)$ for all $y \in \mathcal{Y}_+ \cup \mathcal{Y}_-$
            \STATE $\displaystyle \mathcal{L}_n \;\leftarrow\; \frac{1}{|\mathcal{Y}_+||\mathcal{Y}_-|} \sum_{y_+\in\mathcal{Y}_+}\sum_{y_-\in\mathcal{Y}_-} \log\Bigl(1 + \exp\bigl(E_\theta(y_+) - E_\theta(y_-)\bigr)\Bigr)$
            \STATE $\theta \;\leftarrow\;\theta - \eta\,\nabla_\theta \mathcal{L}_n$
        \ENDIF
    \ENDFOR
\ENDFOR
\end{algorithmic}
\caption{ \textbf{Training procedure for the Energy Outcome Reward Model (\EORM{}):} The model parameters $\theta$ are optimized by minimizing a pairwise Bradley-Terry loss, encouraging lower energy assignments for correct solutions ($y_+$) compared to incorrect ones ($y_-$) within each data group $\mathcal{Y}_n$.}

\end{algorithm}

\section{Extended Related Work}
\label{app:extended_related_work}

This section provides a more detailed discussion of research areas related to our work on the Energy Outcome Reward Model (EORM), covering Chain-of-Thought reasoning, techniques for verifying and reranking LLM outputs, and Energy-Based Models.

\subsection{Chain-of-Thought and Multi-Step Reasoning}
Large Language Models (LLMs) \citep{bai2023qwen, touvron2023llama, openai2023gpt4, guo2025deepseek} have demonstrated remarkable capabilities, particularly when prompted to generate step-by-step solutions. Chain-of-Thought (CoT) prompting \citep{wei2022chain, kojima2022large} has become a cornerstone technique for eliciting complex multi-step reasoning from LLMs. This approach encourages models to "think aloud," breaking down problems into manageable intermediate steps. Various refinements to CoT have been proposed, such as Least-to-Most prompting \citep{zhou2022least}, which guides the model through progressively more complex steps, and Tree-of-Thoughts (ToT) \citep{yao2023tree, zhang2023automatic}, which explores multiple reasoning paths in a structured manner. Retrieval-augmented CoT methods \citep{pan2025chainofaction, rakin2024leveraging, schick2023toolformer, yao2022react, press2022measuring, khattab2022demonstrate} further enhance reasoning by allowing models to consult external knowledge sources or tools.
Despite these advancements, CoT outputs are not infallible; a single incorrect step can derail the entire reasoning process \citep{tong2024dartmath}. This has led to the development of post-hoc processing techniques. Among the most prominent is self-consistency \citep{wang2022self}, which samples multiple CoT outputs and selects the final answer via majority vote. While effective, self-consistency and similar ensemble methods often require generating a large number of candidate solutions, leading to substantial computational overhead \citep{wu2024_selectivefilter}. Our work seeks to mitigate this cost by providing a more efficient mechanism for identifying high-quality CoT solutions.

\subsection{Reranking and Verification of LLM Outputs}
Given the variability in the quality of CoT paths, a subsequent verification or reranking stage is often crucial for improving the final answer accuracy \citep{wu2024_selectivefilter, jacovi2024_reveal, li2023_diverse, lin2025qlass}. Researchers have explored various strategies for this purpose. Some approaches focus on training separate verifier models to score the correctness of generated solutions or even individual reasoning steps \citep{cobbe2023_processsupervision, khalifa2023_grace, jiang2024dodtenhancedonlinedecision}. For example, DI-VeRSe \citep{li2023_diverse} trains a model to perform step-aware verification. Other methods adopt a learning-to-rank perspective \citep{liu2009learning, deng2020_residualEBM}, training models to compare and order candidate solutions based on their perceived quality using pairwise or listwise objectives.
While these verification and reranking techniques can be effective, many existing methods either rely on large, specialized verifier models that add significant computational load or employ complex, computationally expensive decoding strategies. The goal of EORM is to provide a lightweight yet powerful verifier that can efficiently identify correct CoT solutions without imposing such heavy overhead. Techniques like Chain-of-Actions \citep{pan2025chainofaction} and Search-in-the-Chain \citep{xu2023search} also introduce verification but often focus on different aspects or involve more intricate integration with the generation process. EORM distinguishes itself by its post-hoc applicability and its foundation in energy-based principles for efficient scoring.

\subsection{Energy-Based Models (EBMs)}
Energy-Based Models (EBMs) provide a flexible and powerful framework for modeling complex data distributions by assigning a scalar "energy" value to each input configuration \citep{lecun2006_tutorialEBM, du2020implicit}. Lower energy values typically correspond to more plausible or desirable configurations. EBMs have found successful applications in diverse domains, including computer vision \citep{grathwohl2019classifier}, generative modeling \citep{song2021trainable, du2020implicit}, out-of-distribution detection \citep{liu2020energy}, and ranking tasks \citep{grathwohl2019classifier, liu2009learning}.
A key characteristic of EBMs is that they do not necessarily require explicit normalization of the probability distribution (i.e., computation of the partition function), which is often intractable for high-dimensional or complex output spaces \citep{karakida2016dynamical, lecun2006tutorial}. This property makes EBMs particularly well-suited for ranking scenarios, where the primary goal is to compare the relative "goodness" of different candidates rather than to compute their absolute probabilities. This is directly applicable to CoT reranking, as we aim to identify the best solution among several alternatives.
A pivotal insight, leveraged by our work, is that the logits produced by standard discriminative classifiers can be interpreted as negative unnormalized log-probabilities, or effectively, as negative energies \citep{grathwohl2019classifier}. By adopting this perspective, an EBM can be trained using objectives similar to those used for classifiers, without the need for complex sampling procedures often associated with training generative EBMs. EORM builds on this by using a pairwise Bradley-Terry loss \citep{liu2009learning}, a common technique in learning-to-rank, to train the energy function to distinguish between correct and incorrect CoT solutions. Our work thus extends the application of EBM principles to the domain of multi-step reasoning in LLMs, offering an efficient and theoretically grounded alternative to existing verification methods.
% --- End of Extended Related Work Section (Appendix) ---

\section{Theoretical Details}
\label{app:theoretical_details}

This appendix furnishes comprehensive definitions, rigorous proofs, and supplementary remarks that underpin the theoretical concepts and analyses presented throughout the main text of the paper.

\subsection{Preliminaries: Foundations of Energy-Based Models}
\label{app:preliminaries_proofs}

We begin by establishing the fundamental concepts of Energy-Based Models (EBMs) that form the basis of our proposed \EORM{}.

\begin{definitioninner}[Energy Function]
\label{def:energy_function}
An \emph{energy function} $E_\theta: \mathcal{Y} \to \mathbb{R}$ is a function, parameterized by $\theta$, that assigns a scalar value $E_\theta(y)$ (its energy) to each possible configuration or input $y$ from a space $\mathcal{Y}$. By convention, lower energy values typically correspond to more desirable or probable configurations. In the \EORM{} framework, $\mathcal{Y}$ denotes the space of possible Chain-of-Thought (CoT) text sequences, which are characterized by their variable length and complex linguistic and logical structures.
\end{definitioninner}

\begin{definitioninner}[Boltzmann Distribution]
\label{def:boltzmann}
Given an energy function $E_\theta(y)$, the corresponding \emph{Boltzmann (or Gibbs) distribution} $p_\theta(y)$ assigns a probability (or probability density for continuous $\mathcal{Y}$) to each configuration $y \in \mathcal{Y}$ as:
\begin{equation}
\label{eq:boltzmann_appendix}
p_{\theta}(y) \;=\; \frac{\exp\bigl(-E_{\theta}(y)\bigr)}{Z_{\theta}},
\end{equation}
where $Z_{\theta}$ is the partition function.
\end{definitioninner}

\begin{definitioninner}[Partition Function]
\label{def:partition_function}
The \emph{partition function} $Z_{\theta}$ serves as a normalization constant, ensuring that the Boltzmann distribution $p_\theta(y)$ integrates (or sums, for discrete $\mathcal{Y}$) to unity over the entire space $\mathcal{Y}$:
\begin{equation}
\label{eq:partition_function_appendix}
Z_{\theta} \;=\; \int_{y'\in\mathcal{Y}} \exp\bigl(-E_{\theta}(y')\bigr)\,dy' \quad (\text{or } \sum_{y'\in\mathcal{Y}} \exp\bigl(-E_{\theta}(y')\bigr) \text{ for discrete } \mathcal{Y}).
\end{equation}
The computational intractability of $Z_\theta$ for complex, high-dimensional spaces, such as those involving text sequences, is a principal motivation for developing methods like \EORM{} that can operate effectively without its explicit calculation.
\end{definitioninner}

A key property of EBMs relevant to ranking tasks is the equivalence between energy minimization and probability maximization.

\begin{theoreminner}[Energy–Probability Equivalence for Ranking]
\label{thm:energy_prob_equiv}
For any finite candidate set $\mathcal{Y}_{\mathrm{cand}}\subset\mathcal{Y}$, the configuration $y^*$ that minimizes the energy function $E_\theta(y)$ over this set is also the configuration that maximizes the Boltzmann probability $p_\theta(y)$:
\begin{equation}
\label{eq:energy_prob_equiv_appendix}
y^{*} \;=\; \arg\min_{y\in\mathcal{Y}_{\mathrm{cand}}}E_{\theta}(y) \;=\; \arg\max_{y\in\mathcal{Y}_{\mathrm{cand}}}p_{\theta}(y).
\end{equation}
\end{theoreminner}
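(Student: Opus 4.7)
The plan is to exploit the strict monotonicity of the exponential and the positivity of the partition function to reduce probability maximization directly to energy minimization. The key observation is that $Z_\theta$ does not depend on $y$, so dividing by it is a uniform rescaling across $\mathcal{Y}_{\mathrm{cand}}$ that cannot alter which candidate attains the extremum.

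First, I would record that $Z_\theta > 0$. This follows from \Cref{def:partition_function} because $\exp(-E_\theta(y'))$ is strictly positive for every $y' \in \mathcal{Y}$, so the sum (or integral) defining $Z_\theta$ is strictly positive whenever $\mathcal{Y}$ is nonempty. In particular we may divide by $Z_\theta$ without reversing or collapsing any inequalities between candidate scores.

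Second, I would invoke the strict monotonicity of $t \mapsto \exp(-t)$, which is a strictly decreasing bijection from $\mathbb{R}$ onto $(0,\infty)$. Applying this to the scalars $\{E_\theta(y) : y \in \mathcal{Y}_{\mathrm{cand}}\}$ gives, for any $y_1, y_2 \in \mathcal{Y}_{\mathrm{cand}}$,
\begin{equation*}
E_\theta(y_1) \le E_\theta(y_2) \;\Longleftrightarrow\; \exp(-E_\theta(y_1)) \ge \exp(-E_\theta(y_2)) \;\Longleftrightarrow\; p_\theta(y_1) \ge p_\theta(y_2),
\end{equation*}
where the last equivalence uses the positivity of $Z_\theta$ established above and \Cref{def:boltzmann}. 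Since $\mathcal{Y}_{\mathrm{cand}}$ is finite, both $\arg\min_{y \in \mathcal{Y}_{\mathrm{cand}}} E_\theta(y)$ and $\arg\max_{y \in \mathcal{Y}_{\mathrm{cand}}} p_\theta(y)$ are attained, and the biconditional shows that these two optimizing sets coincide; \Cref{eq:energy_prob_equiv_appendix} follows by selecting any common element.

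The main obstacle here is essentially nil: the result is an immediate consequence of definitions plus monotonicity of $\exp$. The only subtlety worth flagging in the write-up is the tie-breaking case, where $\arg\min$ and $\arg\max$ may be multi-valued; the argument above handles this by identifying the two as equal \emph{sets}, so the statement holds for any canonical choice of representative.
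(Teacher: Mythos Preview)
Your proposal is correct and follows essentially the same route as the paper's proof: both arguments reduce $\arg\max p_\theta$ to $\arg\max \exp(-E_\theta)$ via the positivity and $y$-independence of $Z_\theta$, and then to $\arg\min E_\theta$ via the strict monotonicity of $t \mapsto \exp(-t)$. Your write-up is slightly more careful in noting that finiteness guarantees attainment and in handling the tie case set-theoretically, but the underlying logic is identical.
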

\begin{proof}[Proof of \cref{thm:energy_prob_equiv} (Energy–Probability Equivalence for Ranking)]
Consider a finite candidate set $\mathcal{Y}_{\mathrm{cand}}$ and the Boltzmann probability $p_{\theta}(y) = \frac{\exp\bigl(-E_{\theta}(y)\bigr)}{Z_{\theta}}$ for $y \in \mathcal{Y}_{\mathrm{cand}}$. The partition function $Z_{\theta}$, whether defined globally over $\mathcal{Y}$ or locally over $\mathcal{Y}_{\mathrm{cand}}$ (if considering probabilities conditional on this set), is a positive constant with respect to the specific choice of $y$ from $\mathcal{Y}_{\mathrm{cand}}$. Consequently, maximizing $p_\theta(y)$ over $y \in \mathcal{Y}_{\mathrm{cand}}$ is equivalent to maximizing its numerator, $\exp\bigl(-E_{\theta}(y)\bigr)$:
\begin{equation*}
\arg\max_{y\in\mathcal{Y}_{\mathrm{cand}}} p_{\theta}(y) = \arg\max_{y\in\mathcal{Y}_{\mathrm{cand}}} \frac{\exp\bigl(-E_{\theta}(y)\bigr)}{Z_{\theta}} = \arg\max_{y\in\mathcal{Y}_{\mathrm{cand}}} \exp\bigl(-E_{\theta}(y)\bigr).
\end{equation*}
The function $f(x) = \exp(-x)$ is strictly monotonically decreasing, as its derivative, $f'(x) = -\exp(-x)$, is always negative for $x \in \mathbb{R}$. A property of strictly monotonically decreasing functions is that maximizing $f(x)$ is equivalent to minimizing $x$. Applying this, maximizing $\exp\bigl(-E_{\theta}(y)\bigr)$ is equivalent to minimizing its argument, $E_{\theta}(y)$:
\begin{equation*}
\arg\max_{y\in\mathcal{Y}_{\mathrm{cand}}} \exp\bigl(-E_{\theta}(y)\bigr) = \arg\min_{y\in\mathcal{Y}_{\mathrm{cand}}} E_{\theta}(y).
\end{equation*}
Combining these steps, we conclude that $\arg\max_{y\in\mathcal{Y}_{\mathrm{cand}}}p_{\theta}(y) = \arg\min_{y\in\mathcal{Y}_{\mathrm{cand}}}E_{\theta}(y)$. This equivalence is fundamental to using energy functions for ranking and selection tasks.
\end{proof}

\subsection{Details for \EORM{} Architecture and Training Objective}
\label{app:eorm_details}

The \EORM{} model architecture employs a Transformer encoder \citep{vaswani2017attention} to process input CoT sequences. Each input sequence $y$, typically a concatenation of a question and a candidate CoT solution, is first tokenized. A special classification token (e.g., \texttt{[CLS]}) is prepended to the tokenized sequence; its final hidden state representation is conventionally used in Transformer models to capture a summary of the entire sequence.
Let $\mathbf{h}_{\text{CLS}}$ be the final hidden state vector from the Transformer encoder corresponding to this \texttt{[CLS]} token. This vector is then passed through a small Multi-Layer Perceptron (MLP) head, which includes Layer Normalization \citep{ba2016layer} for improved training stability. The MLP projects $\mathbf{h}_{\text{CLS}}$ to a single scalar value, which \EORM{} interprets as the energy $E_\theta(y)$ of the input sequence:
\begin{equation}
\label{eq:energy_head_appendix}
E_\theta(y) \;=\; \text{MLP}\!\Bigl(\text{LayerNorm}\bigl(\mathbf{h}_{\text{CLS}}\bigr)\Bigr) \,\in\,\mathbb{R}.
\end{equation}
The model is trained such that lower energy values $E_\theta(y)$ correspond to higher-quality (i.e., correct) CoT solutions.

For training \EORM{}, the data is organized into groups. Each group $\mathcal{Y}_n$ pertains to a single problem instance $n$ and comprises multiple candidate CoT solutions $\{y_1, \dots, y_k\}$ generated for that problem. Each candidate $y_i$ is accompanied by a binary label $l(y_i) \in \{0, 1\}$, where $l(y_i)=1$ indicates a correct solution and $l(y_i)=0$ indicates an incorrect one. Within each group $\mathcal{Y}_n$, we delineate two subsets: $\mathcal{Y}_+ = \{y\in\mathcal{Y}_n \mid l(y)=1\}$ (correct solutions) and $\mathcal{Y}_- = \{y\in\mathcal{Y}_n \mid l(y)=0\}$ (incorrect solutions).
The optimization of \EORM{}'s parameters $\theta$ is driven by the pairwise Bradley-Terry loss \citep{bradley1952rank}. For a single group $\mathcal{Y}_n$ (assuming it is non-degenerate, i.e., $|\mathcal{Y}_+| > 0$ and $|\mathcal{Y}_-| > 0$), the loss is defined as:
\begin{equation}
\label{eq:pairwise_bt_loss_appendix}
\mathcal{L}(\theta;\mathcal{Y}_n) \;=\; \frac{1}{|\mathcal{Y}_+||\mathcal{Y}_-|} \sum_{y_+\in\mathcal{Y}_+}\sum_{y_-\in\mathcal{Y}_-} \log\Bigl(1 + \exp\bigl(E_\theta(y_+) - E_\theta(y_-)\bigr)\Bigr).
\end{equation}
This loss function penalizes instances where a correct solution $y_+$ is assigned a higher (or not sufficiently lower) energy than an incorrect solution $y_-$. The training procedure, outlined in Algorithm \ref{alg:ebm_training_bradley_terry_methodology}, aims to minimize this loss across all training groups.

\subsection{Interpreting Classifier Logits as Energies: Connection to \EORM{}}
\label{app:logits_as_energies}

A pivotal insight that connects discriminative machine learning models to EBMs is the interpretation of classifier outputs (logits) as quantities related to energy \citep{grathwohl2019classifier, liu2020energy}. This subsection elaborates on this connection and situates \EORM{}'s methodology within this context.

\begin{definitioninner}[Classifier Logits and Softmax Probability]
\label{def:softmax_prob}
For a $K$-class discriminative classifier, let $f_\theta(y) = [f_1(y), \dots, f_K(y)] \in \mathbb{R}^K$ denote the vector of output scores (logits) for an input $y$, where $\theta$ represents the classifier's parameters. The conditional probability $P(k|y; \theta)$ of $y$ belonging to class $k$ is commonly computed using the softmax function:
\begin{equation}
% \label{eq:softmax_prob} % Already labeled above, remove re-label
P(k|y; \theta) = \frac{\exp(f_k(y))}{\sum_{j=1}^K \exp(f_j(y))}.
\end{equation}
\end{definitioninner}

\begin{propositioninner}[Implicit Energy Functions from Classifier Logits]
\label{prop:logits_as_negative_energies}
A $K$-class classifier, as described by its logits $f_\theta(y)$ and softmax probabilities $P(k|y; \theta)$ (\Cref{def:softmax_prob}), can be understood as implicitly defining $K$ class-conditional energy functions $E_k(y; \theta) = -f_k(y)$ for each class $k$.
\begin{enumerate}
    \item Using these energy functions, the conditional probability $P(k|y; \theta)$ can be expressed in an explicitly energy-based form:
    \begin{equation}
    \label{eq:conditional_prob_ebm_reproof}
    P(k|y; \theta) = \frac{\exp(-E_k(y; \theta))}{\sum_{j=1}^K \exp(-E_j(y; \theta))}.
    \end{equation}
    \item Furthermore, a joint probability distribution $P(y, k; \theta)$ over inputs and classes can be consistently defined within an EBM framework. If we posit a joint energy $E(y,k; \theta) = E_k(y; \theta) + E_0(y; \theta)$, where $E_0(y; \theta)$ is an energy function associated with the input $y$ itself (representing the marginal energy of $y$), then:
    \begin{equation}
    \label{eq:joint_prob_ebm}
    P(y, k; \theta) = \frac{\exp\bigl(-(E_k(y; \theta) + E_0(y; \theta))\bigr)}{Z'_\theta},
    \end{equation}
    where $Z'_\theta = \sum_{y' \in \mathcal{Y}}\sum_{l=1}^K \exp\bigl(-(E_l(y'; \theta) + E_0(y'; \theta))\bigr)$ is the global partition function. This joint distribution correctly recovers the classifier's original conditional probabilities $P(k|y; \theta)$.
\end{enumerate}
\end{propositioninner}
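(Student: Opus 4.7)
The plan is to handle the two parts of the proposition in order, since each reduces to a direct algebraic verification once the identification $E_k(y;\theta) = -f_k(y)$ is made explicit. For part 1, I would start from the softmax expression in \Cref{def:softmax_prob}, namely $P(k|y;\theta) = \exp(f_k(y)) / \sum_{j=1}^K \exp(f_j(y))$, and simply substitute $f_k(y) = -E_k(y;\theta)$ in both numerator and denominator. The resulting expression is precisely equation (\ref{eq:conditional_prob_ebm_reproof}), so part 1 follows by inspection. I would briefly note why this is meaningful: any classifier with logits $f_\theta(y)$ admits a canonical reinterpretation as an EBM over the class label conditioned on the input, with no change in predictions and no need to alter training.

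For part 2, the plan is to first verify that the proposed joint is a valid probability distribution, then compute the induced conditional, and finally observe that it matches part 1 independently of the choice of $E_0(y;\theta)$. Non-negativity of $P(y,k;\theta)$ follows from the exponential being positive. Normalization is immediate from the definition of $Z'_\theta$: the double sum (or integral–sum) of the numerator over all $(y',l)$ equals $Z'_\theta$ by construction, so $\sum_{y,k} P(y,k;\theta) = 1$.

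Next, I would compute the marginal by summing out the class label. Because $\exp(-E_0(y;\theta))$ does not depend on $l$, it factors outside the sum over $l$, giving
\begin{equation*}
P(y;\theta) \;=\; \sum_{l=1}^K P(y,l;\theta) \;=\; \frac{\exp(-E_0(y;\theta)) \sum_{l=1}^K \exp(-E_l(y;\theta))}{Z'_\theta}.
\end{equation*}
Dividing $P(y,k;\theta)$ by $P(y;\theta)$, the common factors $\exp(-E_0(y;\theta))$ and $Z'_\theta$ cancel between numerator and denominator, leaving $P(k|y;\theta) = \exp(-E_k(y;\theta)) / \sum_{l=1}^K \exp(-E_l(y;\theta))$, which coincides with the expression established in part 1 and therefore with the original softmax from \Cref{def:softmax_prob}.

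I do not expect a serious technical obstacle; the calculations are routine once the right factorization is spotted. The main subtlety to highlight in the write-up is conceptual: $E_0(y;\theta)$ is not pinned down by the discriminative classifier at all, and the cancellation above is exactly the reason. Both $Z'_\theta$ and the marginal factor involving $E_0$ adjust in lockstep so that the recovered conditional is invariant under the choice of $E_0$. I would make this invariance explicit, since it both justifies calling any such $E_0$ a valid completion of the classifier into a joint EBM and motivates \EORM{}'s use of a learned scalar energy over whole CoT sequences without committing to a generative model of the sequence itself.
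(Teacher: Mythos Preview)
Your proposal is correct and follows essentially the same approach as the paper's own proof: both parts are handled by the same direct substitution and marginalization/cancellation argument, and your concluding remark about the invariance of the recovered conditional under the choice of $E_0(y;\theta)$ mirrors the paper's closing observation. The only minor addition you make is the explicit check of non-negativity and normalization of the joint, which the paper leaves implicit but which is harmless and arguably cleaner.
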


\begin{proof}[Proof of \Cref{prop:logits_as_negative_energies}]: \\
Part 1: Derivation of the energy-based form for $P(k|y; \theta)$.

We begin with the standard softmax definition for $P(k|y; \theta)$:
\begin{equation*}
P(k|y; \theta) = \frac{\exp(f_k(y))}{\sum_{j=1}^K \exp(f_j(y))}.
\end{equation*}
By defining the class-conditional energy $E_k(y; \theta) = -f_k(y)$, it follows that the logit $f_k(y) = -E_k(y; \theta)$.
Substituting $f_k(y) = -E_k(y; \theta)$ into the numerator yields $\exp(-E_k(y; \theta))$.
Similarly, substituting into each term of the sum in the denominator yields $\sum_{j=1}^K \exp(-E_j(y; \theta))$.
Thus, the conditional probability $P(k|y; \theta)$ becomes:
\begin{equation*}
P(k|y; \theta) = \frac{\exp(-E_k(y; \theta))}{\sum_{j=1}^K \exp(-E_j(y; \theta))},
\end{equation*}
which is \Cref{eq:conditional_prob_ebm_reproof}. The denominator, $\sum_{j=1}^K \exp(-E_j(y; \theta))$, serves as a local (input-dependent) partition function, often denoted $Z(y; \theta)$, for the conditional distribution $P(\cdot|y; \theta)$.

Part 2: Consistent definition of the joint probability $P(y, k; \theta)$.

Let us define a joint energy function for the pair $(y,k)$ as $E(y,k; \theta) = E_k(y; \theta) + E_0(y; \theta)$. Here, $E_k(y; \theta) = -f_k(y)$ are the class-conditional energies derived from classifier logits, and $E_0(y; \theta)$ represents an energy associated with the input $y$, effectively capturing the energy of $y$ under its marginal distribution.
The joint probability $P(y, k; \theta)$ is then given by the Boltzmann distribution corresponding to this joint energy:
\begin{equation*}
P(y, k; \theta) = \frac{\exp(-E(y,k; \theta))}{Z'_\theta} = \frac{\exp\bigl(-(E_k(y; \theta) + E_0(y; \theta))\bigr)}{Z'_\theta},
\end{equation*}
where $Z'_\theta = \sum_{y' \in \mathcal{Y}}\sum_{l=1}^K \exp\bigl(-(E_l(y'; \theta) + E_0(y'; \theta))\bigr)$ is the global partition function, summing over all possible inputs $y'$ and all classes $l$.

To confirm consistency, we derive the conditional probability $P(k|y; \theta)$ from this joint distribution using the fundamental relation $P(k|y; \theta) = \frac{P(y, k; \theta)}{P(y; \theta)}$.
First, the marginal probability of $y$, $P(y; \theta)$, is obtained by summing (or marginalizing) the joint probability $P(y,l;\theta)$ over all classes $l$:
\begin{align*}
P(y; \theta) &= \sum_{l=1}^K P(y, l; \theta) \\
&= \sum_{l=1}^K \frac{\exp\bigl(-(E_l(y; \theta) + E_0(y; \theta))\bigr)}{Z'_\theta} \\
&= \frac{\exp(-E_0(y; \theta))}{Z'_\theta} \sum_{l=1}^K \exp(-E_l(y; \theta)).
\end{align*}
Now, we compute the conditional probability $P(k|y; \theta)$:
\begin{align*}
P(k|y; \theta) &= \frac{P(y, k; \theta)}{P(y; \theta)} \\
&= \frac{\frac{\exp\bigl(-(E_k(y; \theta) + E_0(y; \theta))\bigr)}{Z'_\theta}}{\frac{\exp(-E_0(y; \theta))}{Z'_\theta} \sum_{l=1}^K \exp(-E_l(y; \theta))} \\
&= \frac{\exp\bigl(-(E_k(y; \theta) + E_0(y; \theta))\bigr)}{\exp(-E_0(y; \theta)) \sum_{l=1}^K \exp(-E_l(y; \theta))} \\
&= \frac{\exp(-E_k(y; \theta)) \exp(-E_0(y; \theta))}{\exp(-E_0(y; \theta)) \sum_{l=1}^K \exp(-E_l(y; \theta))} \\
&= \frac{\exp(-E_k(y; \theta))}{\sum_{l=1}^K \exp(-E_l(y; \theta))}.
\end{align*}
This resulting expression for $P(k|y; \theta)$ is identical to the energy-based form derived in Part 1 (\Cref{eq:conditional_prob_ebm_reproof}) and, consequently, consistent with the original softmax formulation of the classifier (\Cref{def:softmax_prob}), given the definition $E_j(y; \theta) = -f_j(y)$.
This demonstrates that a joint energy definition of the form $E(y,k; \theta) = E_k(y; \theta) + E_0(y; \theta)$ allows a standard classifier to be seamlessly integrated into a joint EBM framework. The specific choice of $E_0(y; \theta)$ influences the modeled marginal $P(y; \theta)$, but the conditional $P(k|y; \theta)$ remains determined by the classifier's logits. For example, if one aims to have $P(y,k) = P(k|y)P(y)$, then $E(y,k)$ can be set to $-\log P(k|y) - \log P(y)$, which implies a specific form for $E_0(y;\theta)$ related to $-\log P(y)$ and the local partition function $Z(y;\theta)$. However, the proposition's assertion of consistency holds for a general $E_0(y;\theta)$ representing the energy contribution of $y$.
\end{proof}

\subsection{Conceptual Analysis of the Learned Energy Landscape}
\label{app:energy_landscape_analysis}

The energy function $E_\theta(y)$ learned by \EORM{} effectively defines an "energy landscape" across the high-dimensional, discrete space $\mathcal{Y}$ of CoT solutions. The characteristics of this landscape are paramount to \EORM{}'s ability to discriminate effectively between high-quality, correct reasoning paths and flawed or incorrect ones. While \EORM{} does not employ $E_\theta(y)$ for generative sampling (e.g., to create novel CoTs by navigating this landscape), an understanding of the intended structure of this landscape, as sculpted by the training objective, offers valuable insights into its operational mechanism.

\begin{definitioninner}[Optimal Energy Separation (Idealized Goal)]
\label{def:ideal_separation}
Consider a specific problem or question $q$. Let $\mathcal{Y}_{C,q} \subseteq \mathcal{Y}$ denote the set of all correct CoT solutions for $q$, and $\mathcal{Y}_{I,q} \subseteq \mathcal{Y}$ denote the set of all incorrect CoT solutions for $q$. An ideally structured energy landscape, from the perspective of discriminating solutions for question $q$, would satisfy the condition:
\begin{equation}
% \label{eq:ideal_separation} % Already labeled above
\sup_{y_c \in \mathcal{Y}_{C,q}} E_\theta(y_c) < \inf_{y_i \in \mathcal{Y}_{I,q}} E_\theta(y_i).
\end{equation}
This inequality implies the existence of an energy threshold $\tau_q$ such that all correct solutions for question $q$ possess an energy $E_\theta(y_c) < \tau_q$, while all incorrect solutions have an energy $E_\theta(y_i) > \tau_q$. Achieving such perfect global separation for all possible CoTs is a highly stringent condition and unlikely to be fully realized in practice. However, the training process of \EORM{} is designed to approximate this separation, particularly for the types of candidate solutions encountered in the training data.
\end{definitioninner}

\paragraph{Role of the Pairwise Bradley-Terry Loss in Shaping the Landscape.}
The pairwise Bradley-Terry loss, as defined in \Cref{eq:pairwise_bt_loss_appendix}, is instrumental in sculpting the desired energy landscape. For each pair of solutions $(y_+, y_-)$ from the same problem context, where $y_+$ is correct and $y_-$ is incorrect, the loss term is $\mathcal{L}_{pair}(y_+, y_-) = \log(1 + \exp(E_\theta(y_+) - E_\theta(y_-)))$.
To understand its effect, let us define the "energy margin" for a correctly ordered pair as $\delta(y_+, y_-) = E_\theta(y_-) - E_\theta(y_+)$. The loss term can then be expressed as $\log(1 + \exp(-\delta(y_+, y_-)))$. Minimizing this loss is equivalent to maximizing the margin $\delta(y_+, y_-)$, thereby actively pushing $E_\theta(y_+)$ to be lower than $E_\theta(y_-)$.
The gradient of this loss with respect to the energies (see components in \Cref{eq:loss_gradient_appendix}) illustrates this dynamic:
The term $\sigma(E_\theta(y_+) - E_\theta(y_-))$ acts as a dynamic weighting factor.
\begin{itemize}
    \item If $E_\theta(y_+) \ge E_\theta(y_-)$ (i.e., the pair is misordered or has no margin, $\delta(y_+,y_-) \le 0$), the sigmoid term $\sigma(E_\theta(y_+) - E_\theta(y_-))$ is $\ge 0.5$. This results in a relatively strong gradient signal that pushes to decrease $E_\theta(y_+)$ and increase $E_\theta(y_-)$, effectively trying to correct the order and increase the margin.
    \item If $E_\theta(y_+) \ll E_\theta(y_-)$ (i.e., the pair is well-ordered with a large positive margin $\delta(y_+,y_-) \gg 0$), the sigmoid term approaches $0$. The gradient signal becomes weak, as the desired ordering is already satisfied.
\end{itemize}
This adaptive mechanism concentrates the learning effort on problematic pairs, carving out low-energy regions for correct solutions relative to incorrect ones. The loss enforces relative energy orderings rather than absolute energy targets, which can make the training more robust, for example, to imbalances in the number of correct versus incorrect examples per problem.

\subsection{Theoretical Analysis of the \EORM{} Training Objective}
\label{app:theoretical_analysis_eorm}

This section details the mathematical properties of the \EORM{} training objective, including definitions of relevant functions and formal derivations.

\begin{definitioninner}[Sigmoid Function]
\label{def:sigmoid}
The \emph{sigmoid function} $\sigma: \mathbb{R} \to (0, 1)$ is defined as:
\begin{equation}
\sigma(z) = \frac{1}{1 + \exp(-z)}.
\end{equation}
\end{definitioninner}

\begin{definitioninner}[Softplus Function]
\label{def:softplus}
The \emph{softplus function} $\text{softplus}: \mathbb{R} \to \mathbb{R}_{>0}$ is defined as:
\begin{equation}
\text{softplus}(z) = \log(1 + \exp(z)).
\end{equation}
A useful property is that its derivative is the sigmoid function: $\frac{d}{dz}\text{softplus}(z) = \sigma(z)$.
\end{definitioninner}

\begin{theoreminner}[Gradient of Pairwise Bradley-Terry Loss]
\label{thm:loss_gradient}
Let $\mathcal{L}(\theta;\mathcal{Y}_{n})$ be the pairwise Bradley-Terry loss for a group $\mathcal{Y}_n$, as given by \Cref{eq:pairwise_bt_loss_appendix}. Assuming the energy function $E_\theta(y)$ is differentiable with respect to its parameters $\theta$, the gradient of this loss is:
\begin{equation}
\label{eq:loss_gradient_appendix}
\nabla_{\!\theta}\mathcal{L}(\theta;\mathcal{Y}_{n}) \;=\; \frac{1}{|\mathcal{Y}_+||\mathcal{Y}_-|} \sum_{y_{+}\in\mathcal{Y}_{+}} \sum_{y_{-}\in\mathcal{Y}_{-}} \sigma\Bigl(E_\theta(y_{+}) - E_\theta(y_{-})\Bigr) \Bigl( \nabla_{\!\theta}E_\theta(y_{+})\;-\; \nabla_{\!\theta}E_\theta(y_{-}) \Bigr),
\end{equation}
where $\sigma(\cdot)$ is the sigmoid function (\Cref{def:sigmoid}).
\end{theoreminner}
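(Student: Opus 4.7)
The plan is to compute the gradient term-by-term, exploiting linearity of differentiation across the finite double sum and then applying the chain rule to each summand, which is an instance of the softplus function composed with a difference of energies.

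First, I would invoke the linearity of the gradient operator to move $\nabla_\theta$ inside the double summation and past the constant prefactor $1/(|\mathcal{Y}_+||\mathcal{Y}_-|)$, reducing the claim to computing $\nabla_\theta \log(1 + \exp(E_\theta(y_+) - E_\theta(y_-)))$ for an arbitrary pair $(y_+, y_-) \in \mathcal{Y}_+ \times \mathcal{Y}_-$. Next, I would introduce the auxiliary scalar $z(\theta) := E_\theta(y_+) - E_\theta(y_-)$, so that each summand is exactly $\text{softplus}(z(\theta))$ in the notation of \Cref{def:softplus}.

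With this reduction, I would apply the univariate chain rule: since $\text{softplus}$ is differentiable on $\mathbb{R}$ with derivative equal to the sigmoid (a standard fact I would either cite directly from \Cref{def:softplus} or verify in one line by differentiating $\log(1+e^z)$), we obtain
\begin{equation*}
\nabla_\theta \text{softplus}(z(\theta)) \;=\; \sigma(z(\theta))\,\nabla_\theta z(\theta).
\end{equation*}
The inner gradient $\nabla_\theta z(\theta)$ splits, again by linearity, as $\nabla_\theta E_\theta(y_+) - \nabla_\theta E_\theta(y_-)$, which uses the differentiability assumption on $E_\theta$ stated in the hypothesis. Substituting back yields exactly the per-pair contribution in the claimed formula.

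Finally, I would reassemble the double sum and restore the normalization constant to obtain the displayed expression in \Cref{eq:loss_gradient_appendix}. I do not anticipate a substantive obstacle: the argument is a direct chain-rule calculation whose only nontrivial ingredient is the identity $\frac{d}{dz}\text{softplus}(z) = \sigma(z)$, and the exchange of $\nabla_\theta$ with the finite sum requires no further justification beyond linearity. The main thing to be careful about is bookkeeping of signs in the difference $\nabla_\theta E_\theta(y_+) - \nabla_\theta E_\theta(y_-)$ and ensuring that the argument of $\sigma$ matches $E_\theta(y_+) - E_\theta(y_-)$ (rather than its negation) consistently with the orientation chosen inside the softplus.
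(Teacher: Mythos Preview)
Your proposal is correct and follows essentially the same approach as the paper: linearity to push $\nabla_\theta$ through the finite double sum, identification of each summand as $\text{softplus}(E_\theta(y_+)-E_\theta(y_-))$, then the chain rule using $\frac{d}{dz}\text{softplus}(z)=\sigma(z)$ to obtain the per-pair term $\sigma(E_\theta(y_+)-E_\theta(y_-))\bigl(\nabla_\theta E_\theta(y_+)-\nabla_\theta E_\theta(y_-)\bigr)$. There is nothing to add.
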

\begin{proof}[Proof of \cref{thm:loss_gradient}]
The pairwise Bradley-Terry loss for a non-degenerate group $\mathcal{Y}_n$ (where $\mathcal{Y}_+$ and $\mathcal{Y}_-$ are non-empty) is:
\begin{equation*}
\mathcal{L}(\theta;\mathcal{Y}_n) \;=\; \frac{1}{|\mathcal{Y}_+||\mathcal{Y}_-|} \sum_{y_+\in\mathcal{Y}_+}\sum_{y_-\in\mathcal{Y}_-} \log\Bigl(1 + \exp\bigl(E_\theta(y_+) - E_\theta(y_-)\bigr)\Bigr).
\end{equation*}
Let $L_{pair}(y_+, y_-; \theta) = \log\bigl(1 + \exp\bigl(E_\theta(y_+) - E_\theta(y_-)\bigr)\bigr)$ denote the loss contribution from a single pair $(y_+, y_-)$. Using the definition of the softplus function (\Cref{def:softplus}), we can write $L_{pair}(y_+, y_-; \theta) = \text{softplus}(E_\theta(y_+) - E_\theta(y_-))$.
Due to the linearity of the gradient operator, we have:
\begin{equation*}
\nabla_{\!\theta}\mathcal{L}(\theta;\mathcal{Y}_{n}) \;=\; \frac{1}{|\mathcal{Y}_+||\mathcal{Y}_-|} \sum_{y_+\in\mathcal{Y}_+}\sum_{y_-\in\mathcal{Y}_-} \nabla_{\!\theta} L_{pair}(y_+, y_-; \theta).
\end{equation*}
To compute $\nabla_{\!\theta} L_{pair}(y_+, y_-; \theta)$, we apply the chain rule, noting that $\frac{d}{dz}\text{softplus}(z) = \sigma(z)$:
\begin{align*}
\nabla_{\!\theta} L_{pair}(y_+, y_-; \theta) &= \nabla_{\!\theta} \text{softplus}\bigl(E_\theta(y_+) - E_\theta(y_-)\bigr) \\
&= \left.\frac{d}{dz}\text{softplus}(z)\right|_{z=E_\theta(y_+) - E_\theta(y_-)} \cdot \nabla_{\!\theta} \bigl(E_\theta(y_+) - E_\theta(y_-)\bigr) \\
&= \sigma\bigl(E_\theta(y_+) - E_\theta(y_-)\bigr) \cdot \bigl( \nabla_{\!\theta}E_\theta(y_{+})\;-\; \nabla_{\!\theta}E_\theta(y_{-}) \bigr).
\end{align*}
Substituting this expression back into the sum for $\nabla_{\!\theta}\mathcal{L}(\theta;\mathcal{Y}_{n})$ yields the statement of the theorem, \Cref{eq:loss_gradient_appendix}.
\end{proof}

\begin{remarkinner}[Pairwise Bradley-Terry Loss as Negative Log-Likelihood]
\label{rem:bt_nll_connection}
The pairwise Bradley-Terry loss function used in \EORM{} has a strong theoretical grounding as the negative log-likelihood (NLL) of observed preferences under the Bradley-Terry probabilistic model for pairwise comparisons \citep{bradley1952rank}.
This model assumes that each item $y$ possesses an underlying positive "strength" or "score," denoted $\pi_y$. The probability that item $y_i$ is preferred over item $y_j$ is then given by $P(y_i \text{ preferred over } y_j) = \frac{\pi_i}{\pi_i + \pi_j}$.
In an energy-based formulation, we can define the strength of a solution $y$ as inversely related to its energy: $\pi_y = \exp(-E_\theta(y))$. A lower energy $E_\theta(y)$ thus corresponds to a higher strength $\pi_y$.
Under this definition, the probability that a correct solution $y_+$ is preferred over an incorrect solution $y_-$ (which implies $E_\theta(y_+)$ should be less than $E_\theta(y_-)$) can be modeled as:
\begin{align*}
P(y_+ \text{ preferred over } y_- | \theta) &= \frac{\pi_{y_+}}{\pi_{y_+} + \pi_{y_-}} \\
&= \frac{\exp(-E_\theta(y_+))}{\exp(-E_\theta(y_+)) + \exp(-E_\theta(y_-))}.
\end{align*}
Dividing both the numerator and the denominator by $\exp(-E_\theta(y_+))$ yields:
\begin{align*}
P(y_+ \text{ preferred over } y_- | \theta) &= \frac{1}{1 + \exp(-E_\theta(y_-) + E_\theta(y_+))} \\
&= \frac{1}{1 + \exp\bigl(-(E_\theta(y_-) - E_\theta(y_+))\bigr)} \\
&= \sigma\bigl(E_\theta(y_-) - E_\theta(y_+)\bigr), \quad (\text{using the definition of } \sigma(z) \text{ from \Cref{def:sigmoid}}).
\end{align*}
The negative log-likelihood (NLL) of observing this single preference $y_+ > y_-$ is:
\begin{align*}
\text{NLL}(y_+ > y_- | \theta) &= -\log P(y_+ \text{ preferred over } y_- | \theta) \\
&= -\log \sigma\bigl(E_\theta(y_-) - E_\theta(y_+)\bigr).
\end{align*}
Using the identity $-\log \sigma(x) = \log(1 + e^{-x}) = \text{softplus}(-x)$, we have:
\begin{align*}
\text{NLL}(y_+ > y_- | \theta) &= \log\left(1 + \exp\bigl(-\left(E_\theta(y_-) - E_\theta(y_+)\right)\bigr)\right) \\
&= \log\left(1 + \exp\bigl(E_\theta(y_+) - E_\theta(y_-)\bigr)\right).
\end{align*}
This expression is precisely the loss contribution from a single pair $(y_+, y_-)$ as defined in \Cref{eq:pairwise_bt_loss_appendix}. This term is also recognizable as the logistic loss or binary cross-entropy for the event of $y_+$ being preferred over $y_-$, given the "logit" of this preference is $E_\theta(y_-) - E_\theta(y_+)$.
Therefore, minimizing the average pairwise Bradley-Terry loss is equivalent to performing Maximum Likelihood Estimation (MLE) for the parameters $\theta$ under this probabilistic preference model, where preferences are determined by energy differences. This equivalence provides a robust theoretical justification for the chosen loss function.
\end{remarkinner}

\begin{propositioninner}[Unbiased Gradient Estimation from Non-Degenerate Groups]
\label{prop:unbiased_grad}
Let $\mathcal{G}$ represent the true underlying distribution of all training groups $n$. The loss for group $n$, $\mathcal{L}(\theta; \mathcal{Y}_n)$, is defined such that $\mathcal{L}(\theta; \mathcal{Y}_n)=0$ if group $n$ is degenerate (i.e., it lacks either positive examples, $|\mathcal{Y}_+|=0$, or negative examples, $|\mathcal{Y}_-|=0$). Let $\mathcal{G}^{\dagger}$ be the conditional distribution of groups $n$ from $\mathcal{G}$, given that they are non-degenerate.
In stochastic gradient descent (SGD), if we sample a group $n \sim \mathcal{G}^{\dagger}$ (i.e., we exclusively process non-degenerate groups) and compute its gradient $\nabla_\theta \mathcal{L}(\theta; \mathcal{Y}_n)$ using \Cref{eq:pairwise_bt_loss_appendix}, this gradient serves as an unbiased estimate of the expected gradient over \emph{non-degenerate} groups, $\mathbb{E}_{n \sim \mathcal{G}^{\dagger}}[\nabla_\theta \mathcal{L}(\theta; \mathcal{Y}_n)]$. Furthermore, this expectation is proportional to the true expected gradient over \emph{all} groups, $\mathbb{E}_{n \sim \mathcal{G}}[\nabla_\theta \mathcal{L}(\theta; \mathcal{Y}_n)]$.
\end{propositioninner}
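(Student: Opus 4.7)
The plan is to dispatch the two claims separately via elementary manipulations, taking the convention $\mathcal{L}(\theta; \mathcal{Y}_n) \equiv 0$ on degenerate groups at face value. For the first claim, I would observe that when $n \sim \mathcal{G}^{\dagger}$, the sampled group is by definition non-degenerate, so the loss in \Cref{eq:pairwise_bt_loss_appendix} and its gradient \Cref{eq:loss_gradient_appendix} apply directly. Unbiasedness of the single-sample estimator is then immediate from the definition of Monte Carlo expectation: the random variable $\nabla_\theta \mathcal{L}(\theta; \mathcal{Y}_n)$ with $n \sim \mathcal{G}^{\dagger}$ has, by construction, expectation equal to $\mathbb{E}_{n \sim \mathcal{G}^{\dagger}}[\nabla_\theta \mathcal{L}(\theta; \mathcal{Y}_n)]$. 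No work beyond unwinding definitions is needed here.

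For the proportionality claim, I would introduce $p := \mathbb{P}_{n \sim \mathcal{G}}(\text{non-degenerate}) \in (0, 1]$ and apply the law of total expectation by conditioning on degeneracy:
\begin{equation*}
\mathbb{E}_{n \sim \mathcal{G}}[\nabla_\theta \mathcal{L}(\theta; \mathcal{Y}_n)] \;=\; p \, \mathbb{E}_{n \sim \mathcal{G}^{\dagger}}[\nabla_\theta \mathcal{L}(\theta; \mathcal{Y}_n)] \;+\; (1-p)\, \mathbb{E}\bigl[\nabla_\theta \mathcal{L}(\theta; \mathcal{Y}_n) \,\bigm|\, \text{degenerate}\bigr].
\end{equation*}
The key observation is that on the degenerate event the map $\theta \mapsto \mathcal{L}(\theta; \mathcal{Y}_n)$ is the constant-zero function, so its gradient vanishes identically; hence the second term is zero and I recover $\mathbb{E}_{\mathcal{G}}[\nabla_\theta \mathcal{L}] = p \cdot \mathbb{E}_{\mathcal{G}^{\dagger}}[\nabla_\theta \mathcal{L}]$, the claimed proportionality with explicit constant $p$. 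As a small corollary I would note that this also identifies the constant-of-proportionality with the non-degeneracy probability, which is useful for reasoning about effective sample sizes.

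To make both steps fully rigorous, one must justify exchanging $\nabla_\theta$ with the expectation; this is the only step that is not purely bookkeeping, and I therefore flag it as the main---albeit minor---obstacle. Under the mild regularity assumption that $\nabla_\theta E_\theta(y)$ is uniformly bounded in norm over the (finite-length) token sequences in the support of $\mathcal{G}$, each summand in \Cref{eq:loss_gradient_appendix} is dominated by an integrable random variable (using $\sigma(\cdot) \leq 1$ together with the finite cardinalities $|\mathcal{Y}_+|, |\mathcal{Y}_-|$), so the dominated convergence theorem supplies the interchange. This regularity is satisfied by a standard Transformer encoder with bounded weights acting on bounded-length inputs, which is the operative setting for \EORM{}; if minibatches of groups are used in SGD the same conclusion extends immediately by linearity of expectation.
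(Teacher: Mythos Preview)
Your proposal is correct and follows essentially the same approach as the paper: both arguments condition on the degeneracy event, use that the identically-zero loss on degenerate groups has zero gradient, and identify the proportionality constant as the non-degeneracy probability (the paper writes this out via explicit sums over index sets rather than invoking the law of total expectation, but the content is identical). Your additional paragraph justifying the interchange of $\nabla_\theta$ and expectation via dominated convergence is a point of rigor the paper's proof does not address.
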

\begin{proof}[Proof of \cref{prop:unbiased_grad}]
Let $\mathbb{E}_{\mathcal{G}}[\cdot]$ denote the expectation with respect to the distribution $\mathcal{G}$ over all groups, and $\mathbb{E}_{\mathcal{G}^{\dagger}}[\cdot]$ denote the expectation with respect to the conditional distribution $\mathcal{G}^{\dagger}$ over non-degenerate groups. Let $\mathcal{S}_{\text{all}}$ be the set of indices for all possible training groups, $\mathcal{S}_{\text{nd}}$ be the set of indices for non-degenerate groups, and $\mathcal{S}_{\text{d}}$ be the set of indices for degenerate groups, such that $\mathcal{S}_{\text{all}} = \mathcal{S}_{\text{nd}} \cup \mathcal{S}_{\text{d}}$ and $\mathcal{S}_{\text{nd}} \cap \mathcal{S}_{\text{d}} = \emptyset$.

The true expected gradient over all groups is $\mathbb{E}_{\mathcal{G}}[\nabla_\theta \mathcal{L}(\theta; \mathcal{Y}_n)]$. This can be expanded as:
\begin{align*}
\mathbb{E}_{\mathcal{G}}[\nabla_\theta \mathcal{L}(\theta; \mathcal{Y}_n)] &= \sum_{i \in \mathcal{S}_{\text{all}}} P_{\mathcal{G}}(i) \nabla_\theta \mathcal{L}(\theta; \mathcal{Y}_i) \\
&= \sum_{i \in \mathcal{S}_{\text{nd}}} P_{\mathcal{G}}(i) \nabla_\theta \mathcal{L}(\theta; \mathcal{Y}_i) + \sum_{i \in \mathcal{S}_{\text{d}}} P_{\mathcal{G}}(i) \nabla_\theta \mathcal{L}(\theta; \mathcal{Y}_i).
\end{align*}
By our definition, for any degenerate group $i \in \mathcal{S}_{\text{d}}$, the loss $\mathcal{L}(\theta; \mathcal{Y}_i) = 0$. Consequently, its gradient $\nabla_\theta \mathcal{L}(\theta; \mathcal{Y}_i) = \mathbf{0}$ for $i \in \mathcal{S}_{\text{d}}$.
Thus, the sum simplifies to:
\begin{equation*}
\mathbb{E}_{\mathcal{G}}[\nabla_\theta \mathcal{L}(\theta; \mathcal{Y}_n)] = \sum_{i \in \mathcal{S}_{\text{nd}}} P_{\mathcal{G}}(i) \nabla_\theta \mathcal{L}(\theta; \mathcal{Y}_i).
\end{equation*}
Let $P(\mathcal{S}_{\text{nd}}) = \sum_{i \in \mathcal{S}_{\text{nd}}} P_{\mathcal{G}}(i)$ be the total probability of sampling a non-degenerate group. We assume $P(\mathcal{S}_{\text{nd}}) > 0$ (otherwise, no training would occur).
The probability of sampling a specific non-degenerate group $i \in \mathcal{S}_{\text{nd}}$ under the conditional distribution $\mathcal{G}^{\dagger}$ (i.e., given that the sampled group is non-degenerate) is $P_{\mathcal{G}^{\dagger}}(i) = P_{\mathcal{G}}(i | i \in \mathcal{S}_{\text{nd}}) = \frac{P_{\mathcal{G}}(i)}{P(\mathcal{S}_{\text{nd}})}$.
The expected gradient when sampling exclusively from non-degenerate groups is:
\begin{align*}
\mathbb{E}_{\mathcal{G}^{\dagger}}[\nabla_\theta \mathcal{L}(\theta; \mathcal{Y}_n)] &= \sum_{i \in \mathcal{S}_{\text{nd}}} P_{\mathcal{G}^{\dagger}}(i) \nabla_\theta \mathcal{L}(\theta; \mathcal{Y}_i) \\
&= \sum_{i \in \mathcal{S}_{\text{nd}}} \frac{P_{\mathcal{G}}(i)}{P(\mathcal{S}_{\text{nd}})} \nabla_\theta \mathcal{L}(\theta; \mathcal{Y}_i) \\
&= \frac{1}{P(\mathcal{S}_{\text{nd}})} \sum_{i \in \mathcal{S}_{\text{nd}}} P_{\mathcal{G}}(i) \nabla_\theta \mathcal{L}(\theta; \mathcal{Y}_i).
\end{align*}
Comparing the two expectations, we find:
\begin{equation*}
\mathbb{E}_{\mathcal{G}}[\nabla_\theta \mathcal{L}(\theta; \mathcal{Y}_n)] = P(\mathcal{S}_{\text{nd}}) \cdot \mathbb{E}_{\mathcal{G}^{\dagger}}[\nabla_\theta \mathcal{L}(\theta; \mathcal{Y}_n)].
\end{equation*}
Since $P(\mathcal{S}_{\text{nd}})$ is a positive constant (typically close to 1 if degenerate groups are infrequent), the gradient $\nabla_\theta \mathcal{L}(\theta; \mathcal{Y}_n)$ for $n \sim \mathcal{G}^{\dagger}$ (as computed in \Cref{alg:ebm_training_bradley_terry_appendix}) is an unbiased estimate of $\mathbb{E}_{\mathcal{G}^{\dagger}}[\nabla_\theta \mathcal{L}(\theta; \mathcal{Y}_n)]$. Furthermore, this expected gradient over non-degenerate groups is directly proportional to the true expected gradient over all groups. In the context of SGD, this means that updating parameters using gradients from only non-degenerate groups will, on average, follow a direction proportional to the true full-batch gradient direction. This justifies the common practice of skipping degenerate groups during training, as it maintains an optimization path towards minimizing the true expected loss, with the effective learning rate scaled by $P(\mathcal{S}_{\text{nd}})$.
\end{proof}

\section{Implementation Details}
\label{app:code}

This section provides key code snippets illustrating the implementation of the EORM model architecture and the training objective used in this work. 

\subsection{EORM Model Architecture (\texttt{TransEBM})}

The EORM model is implemented using PyTorch, leveraging a standard Transformer encoder architecture. The core components are defined in the \texttt{TransEBM} class, shown in Listing \ref{lst:trans_ebm}. The model takes token IDs and an attention mask as input and outputs a single scalar energy value for the sequence, derived from the final hidden state of the prepended \texttt{[CLS]} token. The variable representing the model's hidden dimension is referred to as \texttt{dim\_model}.

%\newpage

% Removed default values, renamed d_model -> dim_model
\begin{lstlisting}[language=Python, caption={PyTorch implementation of the TransEBM model.}, label={lst:trans_ebm}, basicstyle=\small\ttfamily, breaklines=true, numbers=none]
import torch
import torch.nn as nn

class TransEBM(nn.Module):
    """Lightweight Transformer-based Energy-Based Model."""
    def __init__(self,
                 vocab_size: int,
                 dim_model: int,  
                 n_heads: int,    
                 n_layers: int,  
                 dropout: float): 
        super().__init__()
        self.dim_model = dim_model 
        self.emb = nn.Embedding(vocab_size, dim_model)
        # Standard Transformer Encoder Layer
        encoder_layer = nn.TransformerEncoderLayer(
            d_model=dim_model, 
            nhead=n_heads,
            dim_feedforward=4 * dim_model, 
            activation="gelu",
            dropout=dropout,
            batch_first=True,
            norm_first=True 
        )
        # Stack of encoder layers
        self.enc = nn.TransformerEncoder(encoder_layer, num_layers=n_layers)
        # MLP head to project CLS representation to scalar energy
        self.head = nn.Sequential(
            nn.LayerNorm(dim_model), 
            nn.Linear(dim_model, dim_model), 
            nn.GELU(),
            nn.Linear(dim_model, 1) 
        )

    def forward(self, ids: torch.Tensor, mask: torch.Tensor) -> torch.Tensor:
        """
        Args:
            ids: (B, L) token ids (long tensor)
            mask: (B, L) attention mask (1=real, 0=pad) (long tensor)
        Returns:
            energies: (B,) scalar energy for each sequence (float tensor)
        """
        # Get embeddings, apply scaling using renamed dim
        x = self.emb(ids) * (self.dim_model**0.5)
        # Create padding mask for self-attention
        # True where attention should be masked (i.e., where mask == 0)
        padding_mask = (mask == 0)
        # Pass through Transformer encoder
        x = self.enc(x, src_key_padding_mask=padding_mask)
        # Use the representation of the first token ([CLS])
        cls_representation = x[:, 0]
        # Compute scalar energy using the MLP head
        # .squeeze(-1) removes the trailing dimension of size 1
        energy = self.head(cls_representation).squeeze(-1)
        return energy
\end{lstlisting}

\subsection{Pairwise Bradley-Terry Loss Function}

The model is trained using a pairwise Bradley-Terry objective, implemented as shown in Listing \ref{lst:bt_loss}. This function takes the energy scores assigned by the model to a group of candidate solutions and their corresponding binary labels (1 for correct, 0 for incorrect). It computes the average loss over all pairs of correct and incorrect candidates within the group, encouraging lower energy for correct solutions.

% No changes needed in this listing
\begin{lstlisting}[language=python, caption={PyTorch implementation of the Pairwise Bradley-Terry loss.}, label={lst:bt_loss}, basicstyle=\small\ttfamily, breaklines=true, numbers=none]
import torch
import torch.nn.functional as F

def bradley_terry_loss(energies: torch.Tensor, labels: torch.Tensor):
    """
    Calculates pairwise Bradley-Terry based loss for a group.
    Args:
        energies: (N,) tensor of energy scores for N candidates.
        labels: (N,) tensor of binary labels (1.0=correct, 0.0=incorrect).
    Returns:
        Scalar loss tensor, or None if no valid pairs exist.
    """
    # Find indices of positive (correct) and negative (incorrect) examples
    pos_indices = torch.where(labels == 1.0)[0]
    neg_indices = torch.where(labels == 0.0)[0]

    # Handle degenerate groups (no positives or no negatives)
    if len(pos_indices) == 0 or len(neg_indices) == 0:
        return None # No pairs to compare

    # Get energies for positive and negative examples
    pos_energies = energies[pos_indices]
    neg_energies = energies[neg_indices]

    # Calculate all pairwise energy differences: E(positive) - E(negative)
    # Broadcasting creates a matrix of shape (num_pos, num_neg)
    energy_diffs = pos_energies.unsqueeze(1) - neg_energies.unsqueeze(0)

    # Calculate loss for each pair using softplus: log(1 + exp(diff))
    # This is equivalent to the NLL of P(positive > negative)
    loss_matrix = F.softplus(energy_diffs)

    # Return the mean loss over all pairs
    return loss_matrix.mean()
\end{lstlisting}

\subsection{Hyperparameter Settings}
\label{app:hyperparameters}

\begin{table}[h] % Changed placement specifier
\centering
\caption{\textbf{Key hyperparameters for \EORM{} model architecture and training.}} % Bolded caption
\label{tab:hyperparameters_styled} % Changed label to avoid conflict if old table exists
\begin{tabular}{@{}lll@{}} % Left-aligned columns, minimal padding
\toprule
\textbf{Category} & \textbf{Hyperparameter} & \textbf{Value} \\
\midrule
\multirow{7}{*}{\textbf{\EORM{} Model Architecture}} 
& Embedding Dimension    & 4096 \\
& Transformer Encoder Layers  & 2 \\
& Attention Heads       & 4 \\
& Dropout Rate                      & 0.2 \\
& Feed-forward Dimension            & $4 \times \text{d\_model}$ \\
& Activation Function               & GELU \\
& Normalization Style               & Pre-LN (NormFirst) \\
\midrule
\multirow{4}{*}{\textbf{Tokenizer Configuration}}
& Base Tokenizer          & GPT-2 \\
& Max Sequence Length & 4096 \\
& CLS ID                            & BOS token of tokenizer \\
& PAD ID                            & EOS token \\
\midrule
\multirow{11}{*}{\textbf{Training Configuration}}
& Optimizer                         & AdamW \\
& Learning Rate        & $1 \times 10^{-4}$ \\
& Weight Decay    & 0.01 \\
& LR Scheduler                      & Cosine with Warmup \\
& Warmup Ratio  & 0.2 (of total steps) \\
& Number of Epochs      & 50 \\
& Batch Size (groups)  & 1 \\
& Gradient Clipping Norm            & 1.0 \\
& FP16 (AMP)           & CUDA \\
& Training/Validation Split         & 80\% / 20\%  \\
& DataLoader           & 2 \\
\bottomrule
\end{tabular}
\end{table}

The training and evaluation of the \EORM{} model were conducted using a specific set of hyperparameters, primarily configured via command-line arguments with defaults specified in the training script. Key architectural parameters for the \EORM{} model (a Transformer-based EBM) and crucial settings for the optimization process are detailed in \Cref{tab:hyperparameters_styled}. The AdamW optimizer was used in conjunction with a cosine learning rate schedule including a warmup phase. Training was performed with Automatic Mixed Precision (AMP) if the `--fp16` flag was enabled and a CUDA-compatible GPU was available. The tokenizer specified by default was `gpt2`, with its vocabulary size determining the model's embedding layer dimensions. A fixed random seed (42) was used for shuffling and creating the 80/20 train-validation split from the combined training data.

\section{Dataset Details}
\label{app:dataset}

This section outlines the dataset utilized for training and evaluating our \EORM{} model. We describe the data format, present a comparison with other notable mathematical reasoning datasets, and specify the overall size of our training corpus. The core of our dataset comprises (question, Chain-of-Thought solution, correctness label) triplets. These CoT solutions were generated by employing a suite of Large Language Models to solve problems sourced from established mathematical reasoning benchmarks, principally the training splits of GSM8k \citep{cobbe2021_gsm8k} and MATH \citep{hendrycks2021_math}, as elaborated in the main experimental setup (\Cref{sec:experiments}).

\subsection{Data Format}
\label{app:data_format}
Each instance in our dataset is represented as a JSON object, with multiple instances typically stored in a JSON Lines (`.jsonl`) file. Each JSON object contains the original mathematical problem, a generated CoT solution, and a binary label indicating whether the provided solution correctly answers the problem. An illustrative example of a single data instance is presented in \Cref{tab:tabdata_example}.

\newpage

\begin{tcolorbox}[
    enhanced,
    breakable,
    colback=white!50,
    colframe=black!50,
    coltitle=white!50,
    colbacktitle=blue!45,
    title={\large\bfseries Training Dataset Format Example},
    fonttitle=\sffamily,
    left=2mm, right=2mm, top=1mm, bottom=1mm
]
\label{tab:tabdata_example}

\textbf{label:}\;1

\medskip

\textbf{question:}\\
How many vertical asymptotes does the graph of
\[
y=\frac{2}{x^{2}+x-6}
\]
have?

\medskip
\textbf{gen\_text:}

To determine the number of vertical asymptotes for the function
\( y = \dfrac{2}{x^{2} + x - 6} \), we need to identify the values of
\( x \) that make the denominator zero, as these are the points where
the function is undefined and potentially has vertical asymptotes.

\medskip
\begin{enumerate}\itemsep2pt
  \item \textit{Factor the denominator}\\
        \( x^{2}+x-6 = (x+3)(x-2) \)\\[2pt]
  \item \textit{Solve for zeros}\\
        \((x+3)(x-2)=0 \;\Longrightarrow\;
        x=-3\quad\text{or}\quad x=2\)
  \item \textit{Check for holes}\\
        The numerator is the constant 2, which is non‑zero at
        \(x=-3\) and \(x=2\), so no factors cancel. Hence each
        zero of the denominator corresponds to a vertical asymptote.
\end{enumerate}

Therefore, the graph has
\[
\boxed{2}
\]
vertical asymptotes: one at \( x=-3 \) and one at \( x=2 \).

\end{tcolorbox}

\subsection{Comparison Between Other Dataset }

\begin{table}[htbp] % You can adjust placement options like [h!], [ht], etc.
\centering
\caption{\textbf{Comparison of the \EORM{} training data scale and problem synthesis approach with other mathematical reasoning datasets.} Dataset sizes are approximate, in thousands (k) of samples. "Synthesis Agent" refers to the primary method or model used for generating the core problems in the dataset.}
\label{tab:dataset_comparison_EORM} % Updated label for uniqueness
\begin{tabular}{lll} % Changed to 2 columns: Left, Left
\toprule
% Removed the "Open-Source" header
\textbf{Dataset} & \textbf{Synthesis Agent} & \textbf{Dataset Size (k)} \\
\midrule
% Removed the third cell (symbol) from each row
WizardMath (Luo et al., 2023)   & GPT-4   &    96          \\
MetaMathQA (Yu et al., 2024)    & GPT-3.5 & 395             \\
MMIQC (Liu et al., 2024a)       & GPT-4+GPT-3.5+Human  & 2294 \\
Orca-Math (Mitra et al., 2024)  & GPT-4  & 200              \\
Xwin-Math-V1.1 (Li et al., 2024)& GPT-4    & 1440            \\
KPMath-Plus (Huang et al., 2024)& GPT-4   & 1576             \\
MathScaleQA (Tang et al., 2024) & GPT-3.5+Human    & 2021    \\
DART-Math           & DeepSeekMath-7B-RL   
 & 591 \\
\midrule % Rule separating the "Ours" row
\textbf{EORM}                           & \textbf{None}            &  \textbf{14958}     \\ % Removed the third cell ('& -')
\bottomrule
\end{tabular}
\end{table}

To position our data generation efforts for training \EORM{}, \Cref{tab:dataset_comparison_EORM} provides a comparative overview against several other prominent datasets used in mathematical reasoning research. The table highlights the scale (approximate number of samples) and the primary "Synthesis Agent" responsible for creating the core problems or question-answer pairs in those datasets. Many existing datasets, as indicated, leverage powerful large language models (LLMs) like GPT-4 or GPT-3.5, sometimes augmented with human effort, to synthesize novel mathematical problems. In contrast, our approach for the \EORM{} training data (labeled as "\EORM{}" in the table) does not involve the synthesis of new problems. Instead, we focus on generating a very large corpus of Chain-of-Thought (CoT) solutions by prompting various existing LLMs with problems from established, open benchmarks (such as GSM8k and MATH). Each generated CoT solution is then labeled for correctness based on its final answer. This strategy, reflected by "None" under "Synthesis Agent" for our \EORM{} data, results in a distinct type of dataset geared towards learning to verify reasoning processes rather than generating new problem instances. The substantial size of our collected data, approximately 14.96 million (14958k) (question, CoT solution, label) triplets, is intended to provide a diverse and comprehensive basis for training a robust \EORM{} verifier capable of understanding a wide range of reasoning styles and error patterns.

\section{Hardware Resources}
\label{app:hardware}

The computational experiments presented in this paper, encompassing both the generation of Chain-of-Thought (CoT) solutions and the training of our Energy Outcome Reward Model (\EORM{}), utilized high-performance Graphics Processing Units (GPUs). For the initial phase of generating the CoT candidate solutions from various Large Language Models (LLMs), a range of NVIDIA GPUs was employed. This included access to NVIDIA A800 (40GB), NVIDIA A100 (with both 40GB and 80GB variants), NVIDIA RTX A6000 Ada (48GB), and NVIDIA H100 GPUs  (80GB). The availability of this diverse hardware allowed for extensive data generation across multiple LLM architectures. The training of the \EORM{} model itself was conducted on a more focused set of high-end accelerators. Specifically, we utilized configurations consisting of 2 to 4 NVIDIA H100 (80GB) GPUs. With a setup of 4 NVIDIA H100 (80GB) GPUs, the complete training process for the \EORM{} model took approximately 36 hours.

\section{Ethics and Societal Impact}
\label{app_sec: ethics}
This research focuses on improving the correctness and efficiency of mathematical reasoning in LLMs, which can yield positive societal impacts such as accelerating scientific discovery, enhancing educational tools, and reducing the energy consumption of large-scale AI computations by minimizing flawed outputs. However, we recognize that a powerful and efficient reasoning verifier is a dual-use technology. In malicious hands, it could be used to automate the selection of the most effective or deceptive generated content, such as refining disinformation or generating plausible-but-malicious code. Furthermore, the performance of EORM is intrinsically tied to the dataset used for its training; any biases present in the problem sets (e.g., cultural context in word problems) could lead to the model favoring certain reasoning styles or performing inequitably across different problem domains. Our work is intended purely for beneficial applications, and we advocate for establishing strong ethical guidelines for the development and use of verifier and reward models in advanced LLM systems.

\section{The Use of Large Language Models (LLMs)}
Large Language Models (LLMs) are central to this research in two primary ways. First, they are the object of study; our EORM framework is designed to evaluate and improve the reasoning outputs of base models such as Llama 3, Mistral, and DeepSeekMath. Second, these same LLMs were used as a critical tool to generate the large-scale dataset of Chain-of-Thought solutions (both correct and incorrect) that EORM was trained on. For the preparation of this manuscript, our use of LLMs was strictly limited to polishing language and generating figures. All underlying research and intellectual content, including the EORM framework, its theoretical foundations, the experimental design, and the analysis of results, was completed entirely by the authors.

\end{document}